\title{Attention Basin: Why Contextual Position Matters in Large Language Models}
\author{
    % Authors
    Zihao Yi\textsuperscript{\rm 1},
    Delong Zeng\textsuperscript{\rm 1, \equalcontrib},
    Zhenqing Ling\textsuperscript{\rm 1, \equalcontrib},
    Haohao Luo\textsuperscript{\rm 1},
    Zhe Xu\textsuperscript{\rm 1},
    Wei Liu\textsuperscript{\rm 2},
    Jian Luan\textsuperscript{\rm 2},
    Wanxia Cao\textsuperscript{\rm 2},
    and Ying Shen\textsuperscript{\rm 1, \dag}
}
\title{My Publication Title --- Single Author}
\author {
    Author Name
}
\title{My Publication Title --- Multiple Authors}
\author {
    % Authors
    First Author Name\textsuperscript{\rm 1,\rm 2},
    Second Author Name\textsuperscript{\rm 2},
    Third Author Name\textsuperscript{\rm 1}
}
\begin{document}

\maketitle
\newcommand{\zhenqing}[1]{[\textcolor{blue}{#1}]}
\newcommand{\ours}{\textbf{AttnRank}\xspace}
\theoremstyle{plain}
\newtheorem{theorem}{Theorem}[section]
\newtheorem{proposition}[theorem]{Proposition}
\newtheorem{lemma}[theorem]{Lemma}
\newtheorem{corollary}[theorem]{Corollary}
\theoremstyle{definition}
\newtheorem{definition}[theorem]{Definition}
\newtheorem{assumption}[theorem]{Assumption}
\theoremstyle{remark}
\newtheorem{remark}[theorem]{Remark}

\begin{abstract}
% Transformer-based language models exhibit a pronounced \emph{attention basin} phenomenon when processing long contexts: critical information in mid-sequence positions receives disproportionately low attention weights, hindering effective knowledge utilization. We present a theoretical analysis linking shallow-layer attention distributions to output probabilities, revealing that prioritizing high-attention positions enhances model performance. 
% Based on these insights, we introduce \textbf{Attention-Driven Reranking (\ours)}, a two-stage framework that (i) estimates positional attention preferences using a small calibration set, and (ii) reorders retrieved documents or few-shot examples to align salient content with high-attention positions. \ours{} is model-agnostic and incurs minimal computational overhead. Experiments on multi-hop question answering and few-shot dialogue state tracking demonstrate that \ours{} achieves substantial improvements across various benchmarks and model scales, without altering model parameters or training routines.

The performance of Large Language Models (LLMs) is significantly sensitive to the contextual position of information in the input. To investigate the mechanism behind this positional bias, our extensive experiments reveal a consistent phenomenon we term the \emph{attention basin}: when presented with a sequence of structured items (e.g., retrieved documents or few-shot examples), models systematically assign higher attention to the items at the beginning and end of the sequence, while neglecting those in the middle. Crucially, our analysis further reveals that allocating higher attention to critical information is key to enhancing model performance. Based on these insights, we introduce Attention-Driven Reranking (\ours), a two-stage framework that (i) estimates a model's intrinsic positional attention preferences using a small calibration set, and (ii) reorders retrieved documents or few-shot examples to align the most salient content with these high-attention positions. \ours{} is a model-agnostic, training-free, and plug-and-play method with minimal computational overhead. Experiments on multi-hop QA and few-shot in-context learning tasks demonstrate that \ours{} achieves substantial improvements across 10 large language models of varying architectures and scales, without modifying model parameters or training procedures.

% Positional bias, a critical bottleneck for Large Language Models (LLMs), is not an arbitrary flaw but a predictable pattern. We identify its root cause as the attention basin phenomenon: a systematic tendency for models to over-attend to the start and end of their context while neglecting the middle. We then prove that exploiting this bias—by aligning key information with these high-attention zones—directly boosts knowledge utilization and performance.
% Building on this core principle, we propose Attention-Driven Reranking (ADR), a lightweight, two-stage framework that first maps the model's inherent attention patterns and then reorders input (e.g., retrieved documents) to match them. ADR is entirely training-free, model-agnostic, and adds negligible computational cost, making it a true plug-and-play enhancement. Extensive experiments in multi-hop QA and few-shot dialogue tracking confirm that ADR consistently and significantly improves performance across a wide range of models, offering a powerful new tool for optimizing LLM inputs.

\end{abstract}
\section{Introduction}
\label{sec:intro}
Large Language Models (LLMs) have acquired vast amounts of knowledge from large-scale pretraining corpora and demonstrated exceptional capabilities in understanding and generating natural language. As a result, they have achieved remarkable success across a wide range of language tasks, from text summarization to multi-turn dialogue \cite{yi2024survey,zhang2025rule,ling2025enhancing}. With the growing ability of LLMs to process increasingly long input sequences, Retrieval-Augmented Generation (RAG) has emerged as an effective paradigm for expanding the knowledge boundaries of LLMs and improving answer accuracy. By providing relevant external documents at inference time, RAG enables models to perform competitively even in complex scenarios such as multi-hop question answering and few-shot learning \cite{li2024meqa,yi2025intent}.

However, behind this apparent success lies a fundamental vulnerability: the performance of LLMs is highly sensitive to the position of information within the input context \cite{liu2024lost}. This positional bias poses a critical bottleneck—models often fail to \emph{utilize} long contexts effectively. Even when all necessary information is present, performance can degrade significantly if key content is placed in regions of low attention, leading to suboptimal and unpredictable outcomes \cite{xiao2023efficient}.
% Large Language Models (LLMs) have become a cornerstone of modern AI, achieving remarkable success in complex language tasks from text summarization to multi-turn dialogue \cite{yi2024survey,zhang2025rule,ling2025enhancing}. However, a fundamental vulnerability lurks beneath this success: their performance is critically dependent on the position of information within the input context \cite{liu2024lost}. This positional bias acts as a significant bottleneck, undermining the reliability of LLMs in high-stakes applications like multi-hop question answering and few-shot learning \cite{li2024meqa,yi2025intent}, where the mere arrangement of evidence can drastically alter the final outcome.

% While LLMs have recently demonstrated remarkable reasoning capabilities and the ability to handle long contexts \cite{su2024roformer,press2021train,dao2023flashattention}, this focus on expanding the context window has overshadowed a more critical question: how effectively do models utilize information within this vast space? \cite{xiao2023efficient}. Consequently, a troubling paradox has emerged: although models can technically process increasingly long inputs, their ability to reason over them is often compromised by information's location, leading to suboptimal and unpredictable performance.

This vulnerability manifests empirically as the "lost-in-the-middle" (LIM) phenomenon, where models show a clear preference for information at the beginning or end of the context \cite{liu2024lost}. Although LIM provides an insightful phenomenological analysis, it describes the effect rather than elucidating the underlying cause. Concurrently, other mitigation strategies, such as fine-tuning for better instruction following \cite{li2023long}, explore different avenues but often incur substantial computational overhead without addressing the core mechanism of the bias.

To move beyond surface-level symptoms and uncover the root cause of positional bias, we turn our investigation to the very core of how LLMs process context: the attention mechanism \cite{vaswani2017attention}. To this end, our empirical analysis in a meticulously designed experimental sandbox across 10 mainstream LLMs reveals a consistent and systematic pattern, which we term the \textit{attention basin}: attention is disproportionately concentrated at the boundaries of the overall structural block of context, such as the full set of retrieved documents, creating a trough in the middle where information is neglected. Guided by this discovery, we conduct a theoretical investigation to establish a direct link between this skewed attention allocation and the model's final output probabilities, confirming that placing critical information in high-attention zones is paramount for effective context utilization.

Armed with this mechanistic insight, we propose Attention-Driven Reranking (\ours), a lightweight, training-free yet powerful two-stage method to mitigate positional bias at inference time. First, we probe the model’s intrinsic attention landscape using a small, representative calibration set. Second, we leverage this map to re-rank the input context, strategically aligning the most critical information with the model's natural high-attention regions. Extensive evaluations on multi-hop QA and few-shot learning tasks show that \ours consistently improves performance across 10 mainstream LLMs across varying architectures and scales, achieving significant gains without any model training or parameter modification.

% \begin{figure*}[!h]
%   \centering
%   \begin{subfigure}[b]{\linewidth}
%     \centering
%     \includegraphics[width=\linewidth]{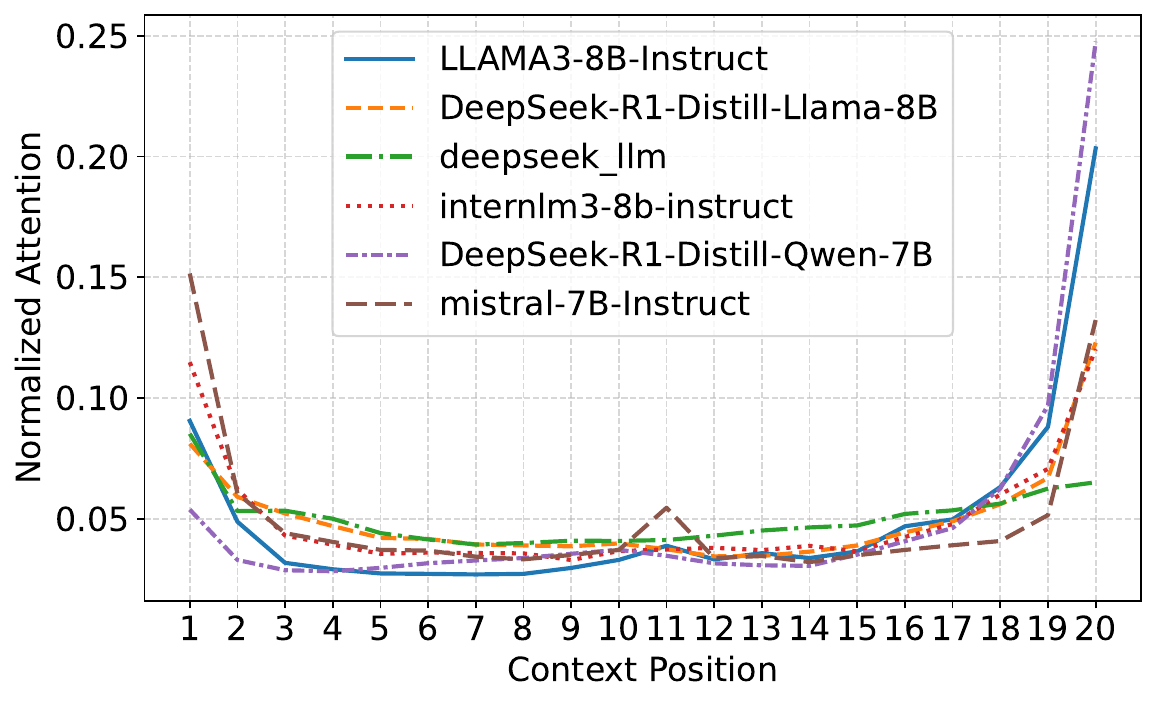}
%     \caption{Attention basin phenomenon: the model persistently allocates higher attention weights to tokens at the beginning and end of the context, while de-emphasising those in the middle.}
%     \label{fig:attention-distribution0}
%   \end{subfigure}
%   \hfill
%   \begin{subfigure}[b]{\linewidth}
%     \centering
%     \includegraphics[width=\linewidth]{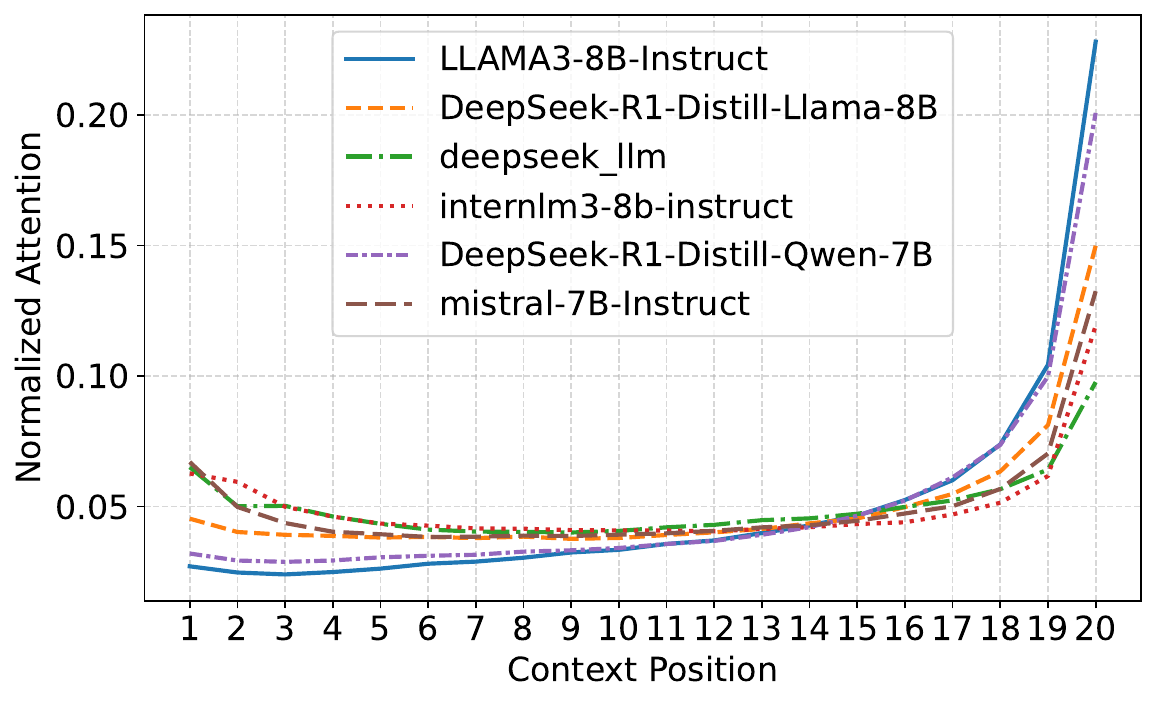}
%     \caption{Effect of disrupted delimiters: the attention basin pattern vanishes after structural delimiters are removed, indicating that the model’s boundary-aware segmentation drives this phenomenon.}
%     \label{fig:attention-distribution1}
%   \end{subfigure}
%   \caption{(a) The attention basin phenomenon, whereby the model exhibits a U-shaped attention distribution favouring context boundaries over central tokens; (b) upon removal of structural delimiters, this U-shaped pattern disappears, confirming that coherent segment boundaries induce elevated attention at input extremities.}
%   \label{fig:attention-distributionall}
% \end{figure*}

% \begin{figure*}[!h]
% \centering
% \includegraphics[width=0.6\linewidth]{}
% \caption{Attention basin phenomenon. Models consistently assign higher attention to information at the beginning and end positions of the context, while information in the middle is neglected.}
% \label{fig:attention-distribution}
% \end{figure*}

This work makes the following key contributions:
\begin{itemize}[leftmargin=*]
% \item \textbf{Identifies and Theorizes a Core Mechanism of Positional Bias:} We reveal the \emph{attention basin} phenomenon as a fundamental driver of positional bias. We provide a theoretical analysis linking shallow-layer attention distributions to final model performance, proving that aligning critical information with high-attention regions is crucial.

% New version by zhenqing
\item  \textbf{Uncovering the Mechanism of Positional Bias:} We empirically and theoretically identify the \emph{attention basin} phenomenon as a core mechanistic driver of positional bias: LLMs intrinsically allocate higher attention to the start and end of the overall structural block of context, aligning critical information with high-attention zones is crucial for effective context utilization.

% Old Version
% \item  \textbf{Uncovering the Mechanism of Positional Bias:} We identify the \emph{attention basin} phenomenon as a core mechanistic driver of positional bias: LLMs systematically allocate higher attention to the start and end of the overall structural block of context, such as the full set of retrieved documents. Furthermore, we demonstrate through both theoretical derivation and experiments that aligning critical information with high-attention zones is crucial for effective knowledge utilization. This insight transforms our understanding of positional bias from a mere empirical observation into a predictable phenomenon.

\item \textbf{Introducing Attention-Driven Reranking Method:} We propose a novel, lightweight and training-free method \ours that first maps a model's inherent positional attention preferences and then reorders the input to exploit these preferences for improved performance.

\item \textbf{Validation of \ours's Effectiveness:} Extensive experiments on multi-hop QA and few-shot learning tasks demonstrate that \ours consistently outperforms baseline strategies across 10 mainstream LLMs of varying architectures and scales, effectively mitigating positional bias and significantly enhancing information utilization.
\end{itemize}

\section{Related Works}
\label{sec:related_work}

The sensitivity of LLMs to the positional placement of information within their context is a well-documented vulnerability. This positional bias compromises their robustness, critically undermining reliability in applications like Retrieval-Augmented Generation (RAG) \cite{lewis2020retrieval} and in-context learning \cite{brown2020language}. This section reviews the two primary research thrusts addressing this issue: characterizing the phenomenon and mitigating its effects.

\paragraph{Characterizing Positional Bias.}
Initial research focused on characterizing the observable effects of positional bias. The most prominent finding is the "Lost in the Middle" (LIM) effect, where models exhibit a U-shaped performance curve, recalling information from the beginning and end of a context far more effectively than from the middle \cite{liu2024lost}. While a powerful phenomenological description, the LIM model does not elucidate the mechanistic underpinnings of this behavior. Subsequent work has sought these mechanisms. The "attention sink" phenomenon, for instance, reveals that models disproportionately attend to initial tokens \cite{xiao2023efficient}. However, this only explains the primacy effect and is limited to a token-level analysis, often on semantically sparse tokens (e.g., start-of-sequence markers), failing to provide a comprehensive model for positional effects across the entire input. An alternative hypothesis suggests the bias stems from a lack of explicit supervision during training, which fails to teach models that all positions are equally relevant \cite{an2024make}. This high-level explanation, however, lacks a concrete, testable mechanism and does not offer a methodology to quantify and compare the degree of bias across different models, leaving a significant gap in both fundamental understanding and empirical evaluation.

\paragraph{Mitigating Positional Bias.}
In response to identified biases, context reranking has emerged as a primary mitigation strategy. The simplest approaches employ brittle and ad-hoc heuristics, such as manually moving important documents to the context's edges \cite{liu2024lost}. While straightforward, their effectiveness is inconsistent across different models and tasks. More sophisticated methods use an auxiliary LLM as a reranker to optimize the context order before processing \cite{wang2025vidorag, 2025arXiv250722050G, zhang2025qwen3}. These LLM-based rerankers can improve performance but introduce significant computational overhead and latency, rendering them impractical for many real-time applications. More recently, training-based solutions have demonstrated the feasibility of directly reducing bias. For example, the information-intensive (IN2) training scheme by \citet{an2024make} effectively alleviates positional bias. However, such methods require a full fine-tuning process, which is computationally expensive and risks degrading the model's general capabilities on other tasks.

In summary, while prior work has successfully identified positional bias and proposed preliminary solutions, a critical gap persists. Our understanding of the underlying mechanisms remains incomplete, and existing mitigation strategies force an undesirable trade-off between unreliable heuristics and computationally expensive methods. This paper bridges this gap by first conducting a deeper mechanistic analysis of positional bias. Based on these insights, we introduce a novel, lightweight strategy that is both principled and efficient, offering a more robust and scalable solution than has been previously available.
\section{Observation and Analysis}
\label{sec:observation}

\begin{figure*}[!h]
  \centering
  \begin{subfigure}[b]{0.48\linewidth}
    \centering
    \includegraphics[width=\linewidth]{}
    \caption{Attention Basin Phenomenon}
    \label{fig:attention-distribution0}
  \end{subfigure}
  \hfill
  \begin{subfigure}[b]{0.48\linewidth}
    \centering
    \includegraphics[width=\linewidth]{}
    \caption{Effect of Disrupted Delimiters}
    \label{fig:attention-distribution1}
  \end{subfigure}
  \caption{(a) The model exhibits a U-shaped attention distribution, prioritizing tokens at the context's start and end boundaries. (b) This pattern disappears following the removal of structural delimiters, indicating that the phenomenon stems from the model's awareness of coherent segment boundaries.}
  \label{fig:attention-distribution}
\end{figure*}

% \begin{figure*}[!h]
% \centering
% \includegraphics[width=0.6\linewidth]{}
% \caption{The attention basin phenomenon disappears after disrupting structural delimiters. This indicates the pattern is caused by the model recognizing coherent input segments and allocating higher attention to their boundaries.}
% \label{fig:attention-distribution1}
% \end{figure*}

In this section, we present our key discoveries that illuminate the mechanisms behind the performance variations of Large Language Models (LLMs) due to contextual position. Our analysis unfolds in three parts: first, we identify a consistent, structure-aware attention pattern we term the \textit{attention basin}; second, we establish a direct link between this attention pattern and model performance; and third, we pinpoint which layers are most critical in forming this positional preference.

\subsection{The \emph{Attention Basin} Phenomenon}
\label{attention_basin}

Inputs to LLMs for tasks like RAG or few-shot learning often consist of multiple, distinct segments. We model such inputs as a sequence of structural blocks 
$S=\{t, d_1, \ldots, d_k, q\}$, where $t$ is a task-defining template, $d_i$ are semantically coherent blocks of content (e.g., retrieved documents), and 
$q$ is the user's query.

While prior work has noted that attention is often high on initial tokens and immediate neighbors \cite{ma2024compressing}, the distribution of attention at a macro level across these structural blocks has been largely overlooked. To investigate this, we designed a precise experiment. For each input, we first located the token indices in the attention matrix corresponding to each document block $D = \{d_1, \ldots, d_k\}$. We then calculated the mean attention score allocated from the query tokens to each document's specific index range.

As shown in Figure~\ref{fig:attention-distribution0}, this quantitative analysis unearthed a consistent and striking pattern across all tested LLMs: the model allocates significantly higher attention to documents located towards the beginning and the end of the input sequence. We term this U-shaped distribution the \emph{attention basin} phenomenon.

This discovery raises a critical question: Is the attention basin a simple artifact of absolute position (i.e., the model just likes the start and end of any text), or is it driven by the model's perception of the input's structure? To distinguish between these possibilities, we designed a disruption experiment aimed at understanding the root cause of this phenomenon. We systematically dismantled the input's structure by removing punctuation, capitalization, and explicit delimiters like "Document [1]", effectively blending the distinct documents into a single, unstructured block of text.

As shown in Figure~\ref{fig:attention-distribution1}, the attention basin effect vanished entirely after the structure was removed. This result provides a profound insight: the phenomenon is not arbitrary. Instead, it reveals that the attention basin is a structural-level analogue to the well-known token-level primacy and recency effects. Just as models are known to over-attend to the first and last tokens of an entire sequence, our experiment demonstrates that they apply a similar heuristic at a higher level of abstraction, granting special status to the structural blocks positioned at the edges of the context. The model recognizes the collection of documents as a set and focuses its attention on the boundaries of that set.

\subsection{How Attention Influences LLMs' Performance}

The attention basin provides a mechanistic explanation for the widely observed lost-in-the-middle effect \cite{liu2024lost}. If documents at the edges receive more attention, it follows that their content would more strongly influence the model's output. We hypothesize that a document's contribution to the final answer is directly proportional to the attention it receives.

To formalize this intuition, we analyze the relationship between a document's attention and the model's output probability. Let $\bar{\alpha}_d = \frac{1}{L}\sum_{l=1}^L \alpha_d^{(l)}$ be the cross-layer average attention weight allocated to a document $d$, and let $P(y^*|\cdot)$ be the generation probability of the correct answer $y^*$. Under the simplifying assumption of semi-orthogonal document representations (Assumption~\ref{ass:ortho}), our theoretical analysis yields the following proposition, with the full proof in Appendix~\ref{main proof}.

\begin{proposition}[Attention-Probability Monotonicity]
\label{prop:mono-main}
For a correct document $d^*$ and any other document $d_j$, the partial derivatives of the correct answer's probability $P(y^*|\cdot)$ with respect to their average attention weights satisfy:
\begin{equation}
    \frac{\partial P(y^*|\cdot)}{\partial \bar{\alpha}_{d^*}} > \left|\frac{\partial P(y^*|\cdot)}{\partial \bar{\alpha}_{d_j}}\right| \ge 0.
\end{equation}
\end{proposition}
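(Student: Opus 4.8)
The plan is to collapse the end-to-end map from attention weights to output probability into a linear-readout model, differentiate the softmax in closed form, and then invoke the semi-orthogonality assumption to compare the two gradients. The differentiation itself is routine; the work is in setting up the right reduced model and in cashing out Assumption~\ref{ass:ortho}.

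First I would make the dependence of $P(y^*\mid\cdot)$ on the $\bar\alpha_d$ explicit using the residual-stream view of a transformer. The query-position hidden state is written as $h = h_{\mathrm{base}} + \sum_{d}\bar\alpha_d\, r_d$, where $h_{\mathrm{base}}$ collects the template and query contributions and $r_d$ is the layer-aggregated ``write'' of document $d$ into the residual stream, obtained by summing the per-layer value contributions $\alpha_d^{(l)} v_d^{(l)}$ and absorbing the factor $1/L$ together with the (approximately layer-invariant) value direction of $d$ into a single vector $r_d$. Composing with the linearized final layer norm and the unembedding matrix gives logits $z_y = \langle w_y, h\rangle$, so $P(y^*\mid\cdot) = \mathrm{softmax}(z)_{y^*}$, which is now manifestly a function of the $\bar\alpha_d$.

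Next I would differentiate. The standard softmax identity gives $\partial P(y^*)/\partial z_y = P(y^*)\,(\mathbf{1}[y=y^*] - P(y))$, so $\partial P(y^*)/\partial h = P(y^*)\,(w_{y^*} - \bar w)$ with $\bar w := \sum_y P(y)\,w_y$. Writing $g := w_{y^*} - \bar w$ for this ``answer direction'' and using $\partial h/\partial\bar\alpha_d = r_d$, the chain rule yields
\begin{equation}
\frac{\partial P(y^*\mid\cdot)}{\partial \bar\alpha_d} \;=\; P(y^*)\,\langle g,\, r_d\rangle ,
\end{equation}
so the sign and magnitude of each partial derivative are governed entirely by the alignment of $r_d$ with $g$. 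Finally I would apply Assumption~\ref{ass:ortho}: the correct document $d^*$ carries the evidence for $y^*$, so its write has a positive component along the answer direction, $\langle g, r_{d^*}\rangle = \beta\,\|g\|$ with $\beta$ bounded away from $0$; for any other document $d_j$, semi-orthogonality of the document representations forces $r_{d_j}$ to be nearly orthogonal to $r_{d^*}$, hence (since $g$ lies essentially along $r_{d^*}$) $|\langle g, r_{d_j}\rangle| \le \epsilon\,\|g\|$ with $\epsilon < \beta$. Multiplying by $P(y^*) > 0$ gives $\partial P(y^*)/\partial\bar\alpha_{d^*} = P(y^*)\beta\|g\| > P(y^*)\epsilon\|g\| \ge |\partial P(y^*)/\partial\bar\alpha_{d_j}| \ge 0$, which is exactly the claim.

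I expect the main obstacle to be the first and last steps rather than the calculus in between: justifying that the full forward pass can be faithfully replaced by a linear readout of a residual stream that is affine in the $\bar\alpha_d$ (this needs the softmax attention aggregation, the MLP blocks, and the final layer norm all to be treated to first order, and needs the per-layer value directions of a given document to be coherent enough to be summarized by one vector $r_d$), and pinning down that the unembedding-derived direction $g$ is genuinely aligned with $r_{d^*}$ while the orthogonality slack $\epsilon$ inherited from Assumption~\ref{ass:ortho} is strictly below $\beta$. I would handle the former by stating the linearization explicitly as the operating regime of the proposition, and the latter by folding ``the correct document encodes the answer'' into the hypotheses as a non-degeneracy condition on $d^*$, so that $\epsilon < \beta$ follows transparently from semi-orthogonality.
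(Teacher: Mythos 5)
Your proposal follows essentially the same route as the paper's proof: you decompose the final hidden state affinely in the $\bar\alpha_d$ (the paper's Lemma~\ref{lem:composition}), differentiate the softmax to get a gradient of the form $P(y^*)\langle g, r_d\rangle$ (matching the paper's $P(1-P)\kappa$ computation in Step~2), and invoke Assumption~\ref{ass:ortho} plus a non-degeneracy condition on $d^*$ to conclude that the correct document's gradient dominates in magnitude (the paper's Steps~1 and~3). The only notable difference is that you use the full multi-class softmax Jacobian with the mean unembedding $\bar w$, which is marginally more careful than the paper's two-class-style formula, but the argument and its acknowledged gaps (linearization of the forward pass, alignment of the answer direction with $r_{d^*}$) are the same.
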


This proposition mathematically confirms that increasing attention on the correct document $d^*$ provides the most effective path to improving model performance. Corollary~\ref{cor:doc-pos} in the appendix further leverages this insight to explain the lost-in-the-middle phenomenon, demonstrating that placing crucial documents in high-attention regions (i.e., the edges) maximizes the probability of generating the correct answer.

% To formalize this, we conducted a mathematical analysis of the Transformer's generation process. Our proof, detailed in Appendix~\ref{main proof}, establishes a monotonic relationship between the attention weight a document receives and the generation probability of the correct answer (Proposition~\ref{prop:mono}). This theoretically confirms that increasing attention to the correct document directly improves model performance. Corollary~\ref{cor:doc-pos} further leverages this insight to explain the lost-in-the-middle phenomenon, demonstrating that placing crucial documents in high-attention regions (i.e., the edges) maximizes the probability of generating the correct answer.

We then validated this theory empirically using the HotpotQA dataset \cite{yang2018hotpotqa}. We constructed inputs with two ground-truth documents  ($d_1$, $d_2$) and one irrelevant noise document $n$ . We tested all six permutations of these documents, measuring both the attention distribution and the final QA accuracy for each. We categorized permutations into two groups: those where the ground-truth documents received the highest cumulative attention, and those where the noise document did.

\begin{figure*}[!h]
\centering
\includegraphics[width=1\linewidth]{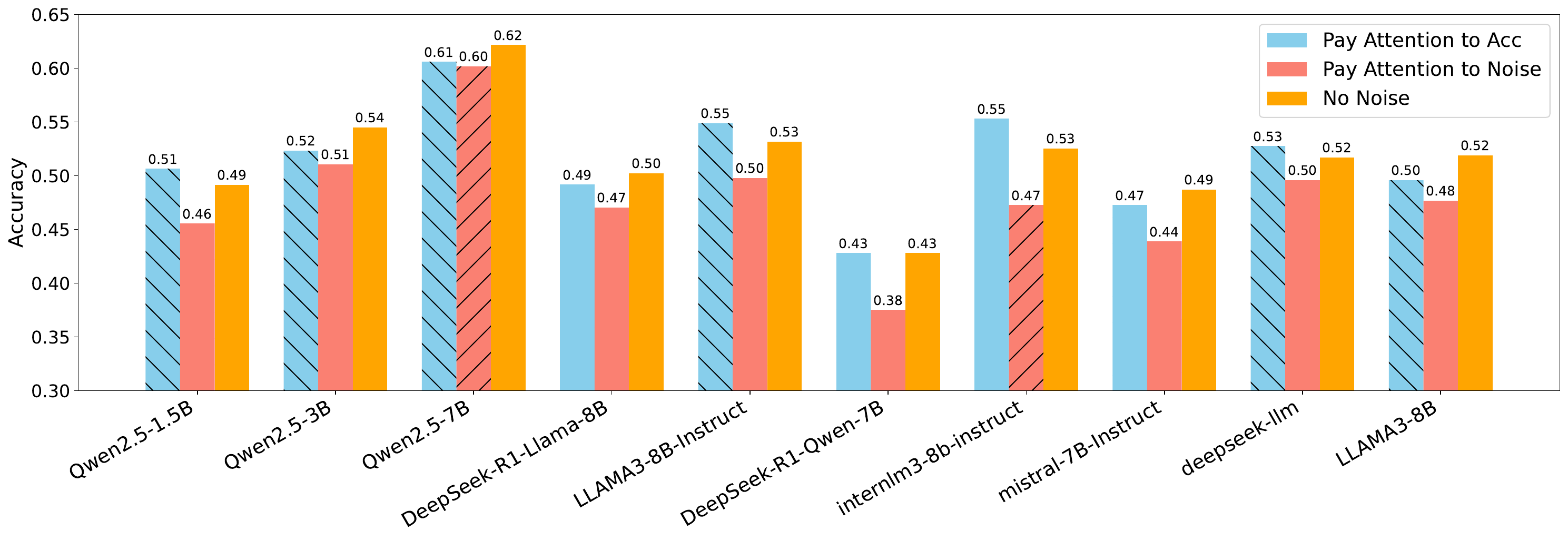}
\caption{Model QA accuracy on HotpotQA across all permutations of two relevant documents and one noise document. Blue bars: permutations where relevant documents receive the highest attention. Red bars: permutations where the noise document receives the highest attention. Orange bar: noise-free baseline. Aligning relevant documents with high-attention positions consistently yields the best performance.}
\label{fig:bar}
\end{figure*}

The results in Figure~\ref{fig:bar} are unequivocal. Permutations where the correct documents received the most attention (blue bars) significantly outperformed those where the noise document was attended to most (red bars). Remarkably, this attention-optimized ordering not only mitigated the impact of the distractor but, in some cases, even surpassed the performance of the noise-free baseline (orange bar). This demonstrates that controlling positional attention is a powerful mechanism for improving model robustness and accuracy.

\subsection{The Critical Role of Shallow Attention Layers}

Prior studies \cite{artzy2024attend,van2019does} have shown that attention across different layers contribute unequally to its behavior. Inspired by Abnar et al.~\cite{abnar2020quantifying}, who demonstrate that information becomes increasingly entangled in deeper layers, we analyze this effect by decomposing the attention score at a given position $p$ and layer $l$. To formalize the \textit{attention basin phenomenon}, our analysis is based on the assumption (detailed in Assumption~\ref{ass:pos-attn}) that expected attention can be decomposed into two parts: a deterministic, position-dependent bias $f(p)$ and a content-driven, position-agnostic component $\epsilon^{(l)}_p$. This is expressed as: $\mathbb{E}[\alpha^{(l)}_{p}] = f(p) + \epsilon^{(l)}_p$.

To quantify the balance between these two forces, we define a ratio $\rho(l)$ of their variances: $\rho(l) = \mathbb{V}[\epsilon^{(l)}_p] / \mathbb{V}[f(p)]$. This ratio leads to our second proposition.

\begin{proposition}[Layer-wise Attention Regimes]
\label{prop:layer-main}
There exists a layer depth threshold $L^*$ that partitions the model into two regimes based on the attention-type variance ratio $\rho(l)$:
\begin{enumerate}
    \item \textbf{Position-dominated regime} ($l < L^*$): $\rho(l) < 1$, where positional bias variance exceeds content variance.
    \item \textbf{Content-dominated regime} ($l \ge L^*$): $\rho(l) \ge 1$, where content-based attention variance becomes dominant.
\end{enumerate}
A full derivation is in Appendix~\ref{main proof} (Corollary~\ref{cor:layer}).
\end{proposition}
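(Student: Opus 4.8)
The plan is to reduce the proposition to a single monotonicity statement about the content-driven variance and then extract the threshold by an elementary order argument. I start from the decomposition of Assumption~\ref{ass:pos-attn}, $\mathbb{E}[\alpha^{(l)}_p] = f(p) + \epsilon^{(l)}_p$, reading $f$ as a deterministic U-shaped position profile and $\epsilon^{(l)}_p$ as the content-conditioned fluctuation whose law depends only on the layer $l$. The first and easy observation is that $f$ carries no layer index, so $\sigma_f^2 := \mathbb{V}[f(p)]$ is a strictly positive constant, and hence $\rho(l) = \mathbb{V}[\epsilon^{(l)}_p]/\sigma_f^2$ is, up to this fixed rescaling, exactly the content variance seen as a function of depth. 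The proposition then holds provided (i) $l \mapsto \mathbb{V}[\epsilon^{(l)}_p]$ is non-decreasing, and (ii) it lies below $\sigma_f^2$ for the shallowest layer and reaches or exceeds $\sigma_f^2$ for the deepest.

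The substantive step is (i). Here I would invoke the entanglement picture of \citet{abnar2020quantifying}: with increasing depth, every token's residual stream accumulates content mixed in from an ever-wider window of the context, so the share of the attention logits that is driven by content rather than by bare position can only grow in spread. Concretely, one can model the effective layer-$l$ attention through a rollout composition and write its content part as a sum of per-layer increments; under the semi-orthogonality of Assumption~\ref{ass:ortho}, these increments contribute roughly additively to the variance, giving $\mathbb{V}[\epsilon^{(l)}_p] \ge \mathbb{V}[\epsilon^{(l-1)}_p]$. If Assumption~\ref{ass:pos-attn} already posits this non-decreasing behaviour as a modelling hypothesis grounded in the empirical measurements of Section~\ref{sec:observation}, then (i) is immediate and the rollout argument only supplies the mechanistic justification.

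Granting (i), I would finish with a finite, order-theoretic argument rather than an intermediate value theorem, since $l$ ranges over $\{1,\dots,L\}$. Setting $L^* := \min\{\, l : \rho(l) \ge 1 \,\}$, which is well defined under the boundary condition of (ii), monotonicity gives $\rho(l) \ge \rho(L^*) \ge 1$ for all $l \ge L^*$, so every such layer is content-dominated, while minimality of $L^*$ forces $\rho(l) < 1$ for all $l < L^*$, so every such layer is position-dominated. This is precisely the two-regime partition asserted by the proposition, and the detailed bookkeeping is what Corollary~\ref{cor:layer} records.

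I expect the monotonicity in (i) to be the real obstacle: ``information becomes more entangled with depth'' is an empirical slogan, and a careless variance accounting could let deeper-layer content signal partially cancel shallower signal and destroy monotonicity. The defensible route is to encode the non-decreasing content variance directly in Assumption~\ref{ass:pos-attn}, citing the Section~\ref{sec:observation} measurements and \citet{abnar2020quantifying}, and to present the rollout decomposition only as corroboration; the remaining ingredients --- constancy of $\mathbb{V}[f(p)]$ and the threshold extraction --- are then routine.
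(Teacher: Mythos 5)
Your proposal matches the paper's own argument: the paper likewise treats $\mathbb{V}[f(p)]$ as a layer-independent constant, asserts rather than derives that $\mathbb{V}[\epsilon^{(l)}_p]$ grows with depth ("as self-attention becomes more semantic"), and obtains $L^*$ as the crossing point of the two variances. Your version is in fact slightly more careful than the paper's — the discrete $\min$-based threshold extraction and the explicit boundary conditions are what the paper's phrase "solving $\mathbb{V}[f(p)] = \mathbb{V}[\epsilon^{(l)}_p]$" tacitly requires, and your diagnosis that the depth-monotonicity of the content variance is an empirical modelling hypothesis rather than a provable step is exactly the status it has in the paper's proof.
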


Thus, \textbf{shallow-layer attention distributions more accurately reflect the model's structural and positional focus} on different input segments. This insight is crucial, as it suggests that the attention patterns from early layers, where the positional signal $f(p)$ is strongest, are the most reliable signal for understanding and manipulating the model's positional preferences.

\begin{figure}[!t]
\centering
\includegraphics[width=1\linewidth]{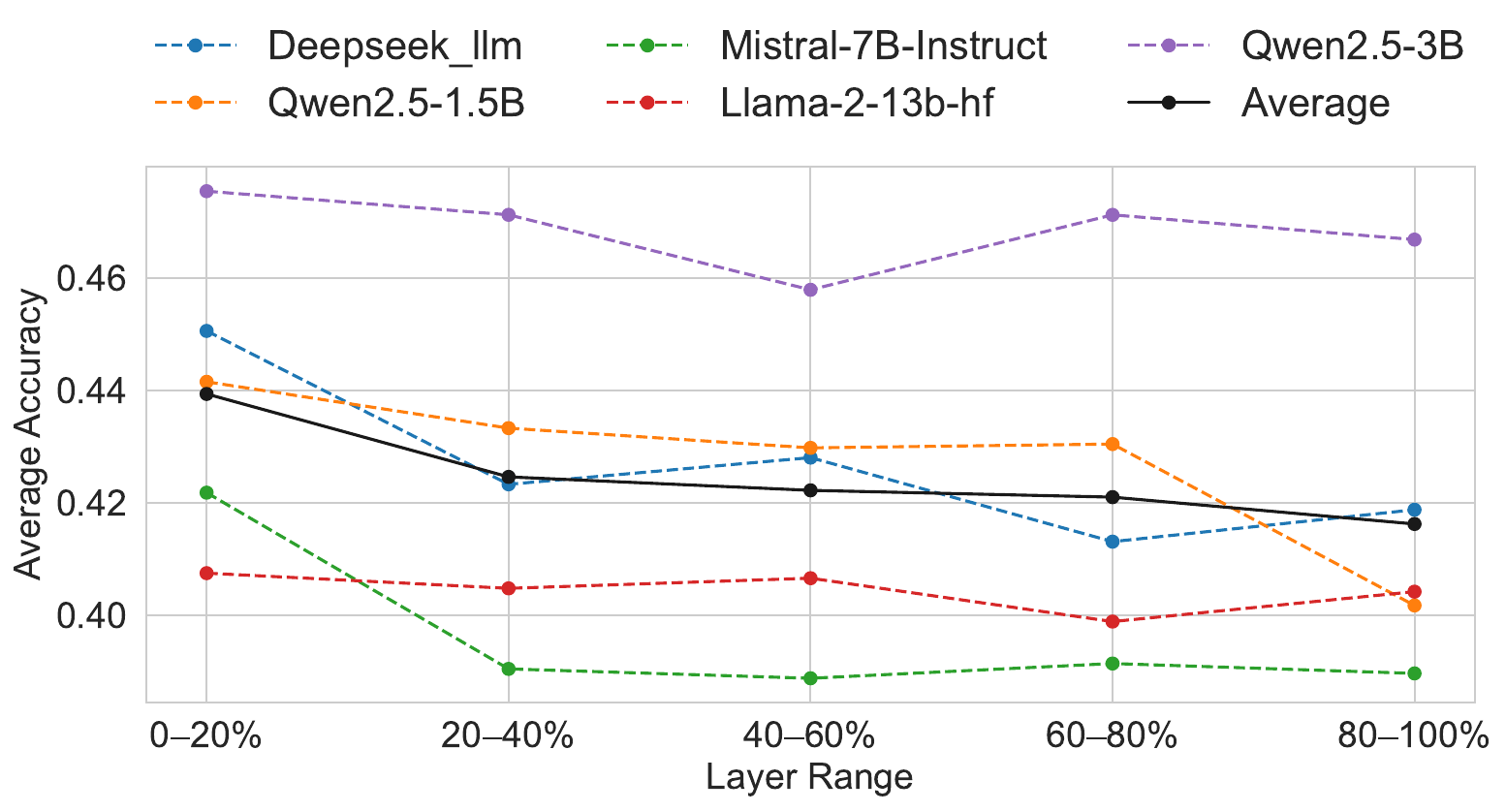}
\caption{Accuracy of a reranking strategy based on attention from different Transformer layers. Reranking using shallow-layer attention consistently outperforms using deeper layers, indicating that LLM's core positional bias is established early.}
\label{fig:layers}
\end{figure}

To verify this, we designed a reranking experiment. For a given query, we retrieved five relevant documents. We then used the attention scores from different layers of the model to determine the optimal position for the most important document. As shown in Figure~\ref{fig:layers}, reranking based on attention from the shallowest layers consistently yielded the highest QA accuracy. This confirms our hypothesis: the model's foundational positional bias is set early in the forward pass, making shallow-layer attention the most effective signal for understanding and ultimately mitigating this bias.
\section{Methodology}
\label{sec:method}

\begin{figure*}[!ht]
    \centering
    \includegraphics[width=1\textwidth]{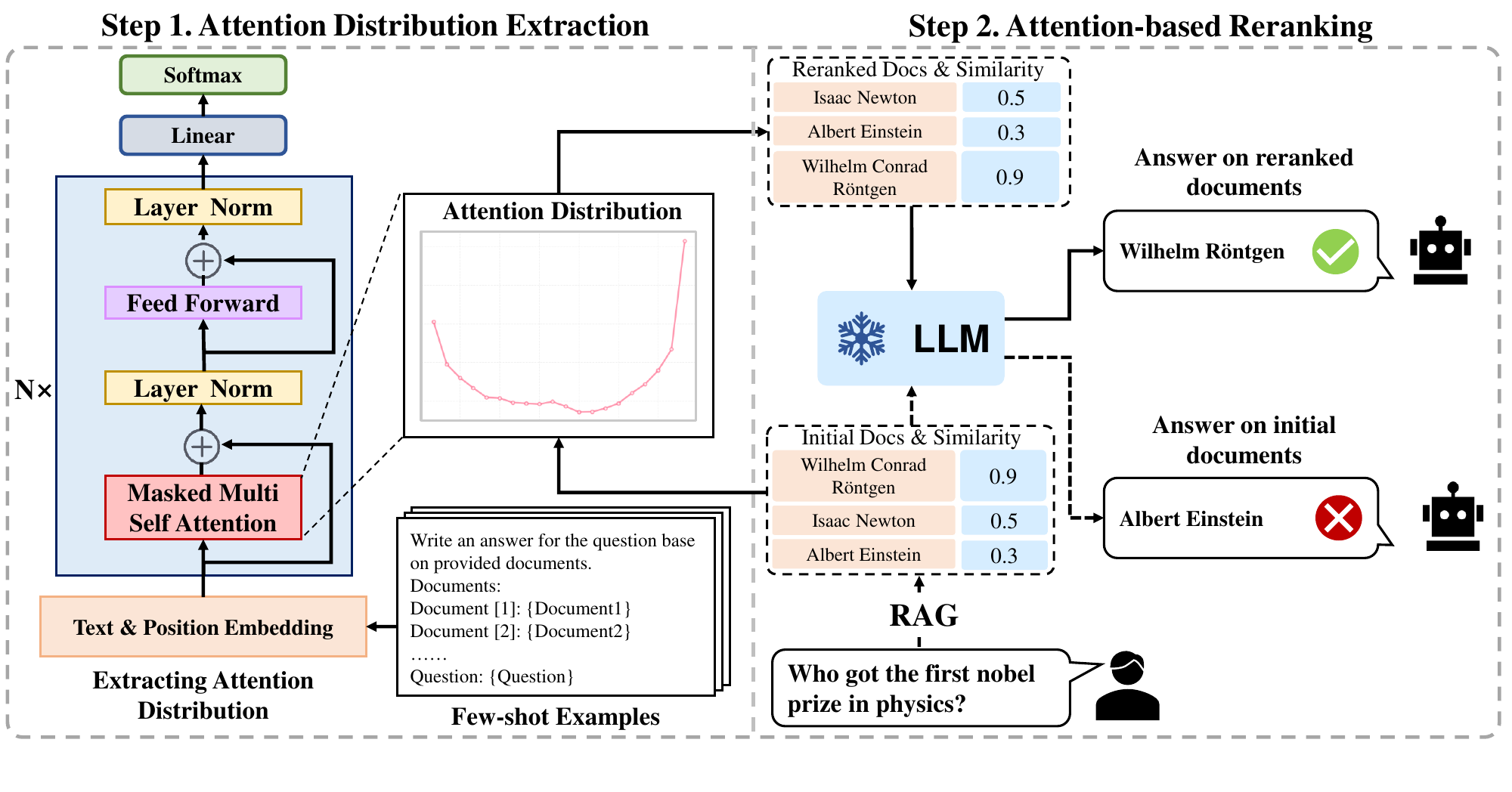}
    \caption{Overview of the AttnRank framework. Step 1: Profiling Positional Attention—We perform a one-time, low-cost analysis to capture the model's intrinsic attention pattern (the attention basin) across document positions. Step 2: Attention-ariven Reranking—For any new query, we reorder the retrieved documents, mapping the most relevant document (highest similarity) to the position with the highest profiled attention score, thus aligning relevance with the model's natural focus.}
    \label{fig:rerank_algorithm}
\end{figure*}

In the previous section, we demonstrated that an LLM's performance is highly sensitive to the positional ordering of documents, a phenomenon driven by the \textit{attention basin}. This finding suggests that instead of fighting against the model's intrinsic biases, we can strategically leverage them.

To this end, we propose the \textbf{Attention-Driven Reranker (\ours)}, a lightweight yet highly effective training-free method that aligns document relevance with the model's innate attention preferences. \ours intelligently reorders input documents to place the most critical information in the positions where the model is naturally inclined to focus, thereby mitigating positional bias and maximizing performance. As illustrated in Figure~\ref{fig:rerank_algorithm}, the framework operates in two key stages: \textbf{Attention Distribution Extraction} and \textbf{Attention-based Reranking}.

\subsection{Step 1: Attention Distribution Extraction}

The foundational step of \ours is to create a stable, general-purpose "attention profile" for a given LLM. This profile serves as a map of the model's positional biases. Based on our findings in Section~\ref{sec:observation}, we use the shallowest attention layer, as it provides the purest signal of the model's intrinsic positional preference.

To generate this profile, we craft a set of probe inputs $S_i = \{t, d_1, \ldots, d_k, q\}$, where $t$ is a fixed task template, $D = \{d_1, \ldots, d_k\}$ are placeholder documents, and $q$ is a query. We then compute the average attention paid by the query tokens to each document position across multiple samples:
\begin{equation}
\label{eq:attention_extraction}
\begin{split}
    A &= \{a_1, \ldots, a_k\} \\
    &= \frac{1}{N} \sum_{i=1}^{N} \mathrm{Attention}(\{d_1, \ldots, d_k\}_i \mid S_i),
\end{split}
\end{equation}
where $A$ is the final attention profile, $a_j$ is the average attention score for the $j$-th position, $N$ is the number of probe samples, and $\mathrm{Attention}(\cdot)$ extracts the mean attention from the query to each document block.

Crucially, we find that this profiling process is highly efficient. Our experiments show that a stable attention profile can be established with a remarkably small number of samples—often as few as 400. For some models, the characteristic \textit{attention basin} pattern emerges with just a single example (see Appendix~\ref{how many} for details). This one-time, low-cost profiling step yields a reusable attention map that captures the essence of the model's positional bias.

\subsection{Step 2: Attention-based Reranking}

Once the model's attention profile $A$ is established, it can be deployed to optimize document ordering for any subsequent task, such as Retrieval-Augmented Generation (RAG). The reranking procedure is as follows:

\begin{enumerate}[leftmargin=*]
    \item \textbf{Retrieve:} For a given user query, use a standard retriever to fetch the top-$k$ most relevant documents, $D = \{d_1, \ldots, d_k\}$, ranked by a similarity metric .
    
    \item \textbf{Rerank:} Instead of feeding the documents to the LLM in their default similarity-ranked order, we reorder them according to the pre-computed attention profile $A$. Specifically, we map the document with the highest relevance score ($d_1$) to the position with the highest attention score in $A$, the second-most relevant document ($d_2$) to the position with the second-highest attention, and so on.
    
    \item \textbf{Generate:} Concatenate the reordered documents with the query and prompt, and feed the final input into the LLM for generation.
\end{enumerate}

\begin{table*}[!h]
  \centering

  \resizebox{\textwidth}{!}{%
  \begin{tabular}{lccccc}
    \toprule
    \textbf{Models} & \textbf{Random} & \textbf{Descending} & \textbf{Ascending} & \textbf{LIM\cite{liu2024lost}} & \textbf{\ours (Ours)} \\
    \midrule
    InternLM3-8B-Instruct (\citeyear{cai2024internlm2}) & 41.41 & \textbf{42.92} & 40.91 & \underline{41.91} & 41.56 \\
    Mistral-7B-Instruct (\citeyear{2023arXiv231006825J})& 39.47 & 38.65 & 38.90 & \underline{41.52} & \textbf{42.17} \\
    LLAMA-2-13B-hf (\citeyear{touvron2023llama})& 38.37 & 38.20 & \underline{40.74} & 39.67 & \textbf{41.00} \\
    LLAMA3-8B (\citeyear{grattafiori2024llama}) & 41.36 & 40.76 & \textbf{43.19} & 42.09 & \underline{43.09} \\

    LLAMA3-8B-Instruct (\citeyear{grattafiori2024llama}) & 44.77 & \underline{45.04} & 44.81 & 44.02 & \textbf{46.32} \\
    DeepSeek-R1-Distill-Llama-8B (\citeyear{guo2025deepseek}) & 39.45 & 38.59 & 39.81 & \underline{39.82} & \textbf{41.77} \\
    DeepSeek-LLM (\citeyear{bi2024deepseek})& 42.14 & 41.20 & 41.51 & \underline{42.22} & \textbf{45.05} \\

    Qwen 2.5 1.5B (\citeyear{yang2024qwen2})& 40.62 & 41.23 & \underline{43.56} & 39.48 & \textbf{44.14} \\
    Qwen 2.5 3B (\citeyear{yang2024qwen2})& 45.77 & 46.42 & \underline{47.45} & 45.62 & \textbf{47.54} \\
    Qwen 2.5 7B (\citeyear{yang2024qwen2})& 52.32 & 53.31 & \textbf{54.64} & 52.18 & \underline{54.55} \\

    \midrule
    \textbf{Average Accuracy} & 42.57 & 42.63 & \underline{43.55} & 42.85 & \textbf{44.72} \\
    \bottomrule
  \end{tabular}%
  }
    \caption{Answer accuracy (\%) on the HotpotQA dataset using different document ordering strategies. Five documents are retrieved per question, and models are evaluated on how accurately they answer the question based on the ordered input.}
  \label{tab:hotpotqa_results}
\end{table*}

By synchronizing the relevance hierarchy of the documents with the attention hierarchy of the model, \ours ensures that the most critical information is placed exactly where the model is hardwired to look. This alignment preemptively resolves the conflict between document importance and positional bias, allowing the model to focus its computational resources effectively and avoid distractions from less relevant information.

 As a training-free and model-agnostic method, \ours can be universally applied to any LLM without altering its parameters. The computational overhead is negligible, consisting of a single, reusable profiling step. Furthermore, because \ours operates as an input pre-processing stage, it is fully compatible with modern inference acceleration frameworks like Flash Attention and vLLM, allowing its benefits to be seamlessly integrated without compromising existing performance optimizations.

\section{Experiments}
\label{sec:exps}

To validate the effectiveness of \ours, we raise the following three key Research Questions (\textbf{RQs}). We conduct comprehensive experiments on various datasets using 10 mainstream LLMs across different architectures and scales to systematically address these \textbf{RQs}:
\begin{itemize}[leftmargin=*]
  \item \textbf{RQ1}: Does placing informative documents in high‐attention slots improve LLM's performance?
  \item \textbf{RQ2:} How does the effectiveness of \ours generalize across a diverse range of LLM architectures and scales?
  \item \textbf{RQ3:} Is \ours robust across different tasks and datasets that rely on integrating multiple documents?
\end{itemize}
\subsection{Experimental setup}
\paragraph{Utilized LLMs}To assess the generality of attention basin and the broad applicability of \ours, we conduct experiments on a diverse set of models with varying structures and parameter counts, including \textbf{\textit{DeepSeek}} series (7B, Distill-Llama), \textbf{\textit{Llama}} series (7B, 13B), MoE-based \textbf{\textit{Mistral-7B}}, \textbf{\textit{Qwen2.5}} series (1.5B-7B) and \textbf{\textit{InternLM3-8B}}, covering most of the mainstream advanced LLMs, details of the models are provided in Appendix~\ref{LLMs}.

\paragraph{Baselines}Four reranking baselines are applied to the retrieved document set: (1) \textbf{Random}: Documents shuffled randomly. (2) \textbf{Similarity Descending}: Documents sorted by retrieval similarity in descending order.
(3) \textbf{Similarity Ascending}: Documents sorted by retrieval similarity in ascending order.
(4) \textbf{Lost-in-the-Middle (LIM)} \cite{liu2024lost}: Places the highest-similarity documents at the beginning and end of the input sequence.

  % \item \textbf{Attention-Driven Reranking}: Our proposed method, which leverages shallow-layer attention distributions to adjust document order.

\paragraph{Datasets} 
We evaluate the effectiveness of \ours on three widely-used benchmark datasets. For multi-hop question answering, we use \textbf{HotpotQA} \cite{yang2018hotpotqa} and \textbf{2WikiMultiHopQA} \cite{ho2020constructing} to assess complex reasoning capabilities. For multi-domain conversational tasks, we employ the \textbf{MultiWOZ} \cite{eric2020multiwoz,ye2022multiwoz} series dataset. Details for each dataset are provided in Appendix~\ref{LLMs}.

\paragraph{Metrics}
We evaluate multi-hop QA performance using \textbf{answer average accuracy} as a straightforward metric.
For few-shot experiment, we use the \textbf{Joint Goal Accuracy (JGA)} \cite{henderson2014word} as the evaluation metric. For each turn in a dialogue, a dialogue state is
considered correct only if it \textbf{exactly matches} the ground truth.

\subsection{Multi-hop QA experiment}
\label{exp1}
Multi-hop QA serves as an ideal testbed for our hypothesis, as it forces the model to locate and integrate key facts scattered across several documents. Success is highly dependent on the model's ability to process the provided context effectively.

\paragraph{Experimental setup} 
For each question, five candidate documents are retrieved by beam-retriever \cite{zhang2024end}, which achieves state-of-the-art retrieval performance on retrieving tasks. As shown in Appendix~\ref{sec:appendix-exps-implement}), all reranked document lists are fed into a unified QA model with a identical prompt template.

\begin{table*}[!h]
  \centering
  % \small % Reduce font size
\setlength{\tabcolsep}{3.5pt} % Reduce column padding

  \begin{tabular}{lccccc}
    \toprule
    \textbf{Models} & \textbf{Random} & \textbf{Descending} & \textbf{Ascending} & \textbf{LIM \cite{liu2024lost}} & \textbf{\ours (Ours)} \\
    \midrule
    InternLM3-8B-Instruct (\citeyear{cai2024internlm2})& 39.14 & \textbf{40.87} & 38.41 & \underline{39.94} & 39.56 \\
    Mistral-7B-Instruct (\citeyear{2023arXiv231006825J})& 29.64 & 28.47 & \textbf{33.14} & 27.74 & \underline{32.54} \\
    LLAMA-2-13B-hf (\citeyear{touvron2023llama})& 30.21 & 30.27 & \textbf{31.79} & 30.18 & \textbf{31.79} \\
    LLAMA3-8B (\citeyear{grattafiori2024llama})& 32.17 & 30.78 & \underline{33.39} & 30.35 & \textbf{33.97} \\
    LLAMA3-8B-Instruct (\citeyear{grattafiori2024llama})& 37.39 & 37.91 & \textbf{39.35} & 36.59 & \underline{38.14} \\
    DeepSeek-R1-Distill-Llama-8B (\citeyear{guo2025deepseek}) & 27.09 & 26.68 & \underline{28.02} & 26.78 & \textbf{28.24} \\

    DeepSeek-LLM (\citeyear{bi2024deepseek})& 27.83 & 28.12 & \underline{30.54} & 28.16 & \textbf{31.31} \\

    Qwen 2.5 1.5B (\citeyear{yang2024qwen2})& 30.05 & 29.48 & \underline{33.34} & 29.35 & \textbf{33.97} \\
    Qwen 2.5 3B (\citeyear{yang2024qwen2})& 32.72 & \textbf{34.33} & 33.55 & 32.18 & \underline{34.08} \\
    Qwen 2.5 7B (\citeyear{yang2024qwen2})& 41.22 & 41.55 & \underline{43.44} & 39.71 & \textbf{43.55} \\
    \midrule
    \textbf{Average} & 32.75 & 32.85 & \underline{34.50} & 32.10 & \textbf{34.72} \\
    \bottomrule
  \end{tabular}%
    \caption{Answer accuracy (\%) on 2WikiMultiHopQA using different document ordering strategies. Five documents are retrieved per question, and models are evaluated on answer correctness.}
  \label{tab:2wiki_results}
\end{table*}

\paragraph{Experiment results and analyze} 

As shown in Tables~\ref{tab:hotpotqa_results} and~\ref{tab:2wiki_results}, \ours{} consistently outperforms baselines on HotpotQA (44.72\% vs.\ Random 42.57\%, Descending 42.63\%, Ascending 43.55\%, LIM 42.85\%) and 2WikiMultiHopQA (34.72\% vs.\ Random 32.75\%, Descending 32.85\%, Ascending 34.50\%, LIM 32.10\%). Notably, while ascending order occasionally performs competitively, particularly on 2WikiMultiHopQA, \ours still consistently yields equal or superior accuracy, indicating that simple heuristics do not fully capture the model's attention dynamics. These results answer \textbf{RQ1} that placing the most informative documents in positions where the model’s attention is most focused significantly enhances its ability to reason across multiple hops.  The consistent improvements across different architectures (e.g., LLAMA3, Qwen, Mistral, DeepSeek) and model sizes ranging from 1.5B to 13B further validate the generalizability of \ours, thereby answering \textbf{RQ2}. Additional experimental details, including attention distribution and case studies, are provided in Appendix~\ref{add exp}.

\subsection{Few-shot experiment}
Few-shot generation task involves generating outputs based on a small number of example demonstrations and a user query. We design an experiment to assess the \ours in this setting.
The IC-DST algorithm \cite{hu2022context} retrieves a small set of in-context examples to generate SQL queries, thereby extracting user intent from dialogue history and maintaining an up-to-date dialogue state. The precision of the dialogue state is strongly correlated with the quality of the code generation, for which we choose to test the effectiveness of the \ours based on the IC-DST algorithm.

\paragraph{Experimental setup} 
For each dialogue turn, the trained IC-DST retriever fetch $k$ example dialogues from the few-shot context pool. The same five reranking strategies described in Section \ref{exp1} are applied to the retrieved examples. The sorted examples and the current dialogue history are then passed to the Code Llama \cite{roziere2023code} with a fixed prompt template (see Appendix~\ref{sec:appendix-exps-implement}).

\paragraph{Experiment results and analyze} 

\begin{table}[!h]
\setlength{\tabcolsep}{3.5pt}
  \centering
  \begin{tabular}{lccc}
    \toprule
    \multirow{2}{*}{Method} & MultiWOZ & MultiWOZ & \multirow{2}{*}{Average} \\
                            & 2.1        & 2.4        &                  \\
    \midrule
    Random                  & 41.12      & 50.11      & 45.62            \\
    Descending              & 42.10      & 50.61      & 46.36            \\
    Ascending               & 42.67      & 51.16      & 46.92            \\
    LIM\cite{liu2024lost}                     & 42.44      & 51.14      & 46.79            \\
    \ours \textbf{(Ours)}   & \textbf{42.89} & \textbf{51.51} & \textbf{47.20} \\
    \bottomrule
  \end{tabular}%
  \caption{Joint Goal Accuracy (\%) on MultiWOZ 2.1 and 2.4 (1\% few-shot sample), under different context retrieval reranking methods.}
  \label{tab:fewshot}
\end{table}

As shown in Table~\ref{tab:fewshot}, \ours outperforms all baselines on MultiWOZ 2.1 and 2.4, improving over random by 1.58\%. All structured ordering strategies (Descending, Ascending, LIM) yield gains over random, indicating that systematic example ordering enhances context relevance for dialogue state tracking. \ours’s additional improvements over Ascending and LIM confirm that aligning high-value examples with the model’s attention peaks further boosts extraction accuracy, validating our hypothesis.

Overall, the strong performance on both multi-hop QA and few-shot learning—spanning four datasets and two distinct task types—provides a comprehensive and positive answer to \textbf{RQ3}. Our findings demonstrate that \ours is a robust and effective method for mitigating negative positional effects across diverse scenarios.
\section{Conclusion}
\label{conclusion}

% This paper introduces the \emph{attention basin} phenomenon and establishes its theoretical underpinnings in Transformer decoders. By leveraging shallow‐layer attention as a reliable importance signal, \ours dynamically reorders input contexts to prioritize essential information. The method requires no additional training or architectural modifications and seamlessly integrates with existing retrieval and prompting pipelines. Empirical evaluations confirm that \ours enhances performance on complex reasoning and dialogue state tracking tasks. In addition, we provide a more detailed discussion of the limitations and future directions of our work in Appendix~\ref{limitation}, with the aim of offering insights for further research and practical deployment.

In this work, we identified a fundamental mechanism governing positional bias in Large Language Models: the attention basin phenomenon. We demonstrated that LLMs exhibit a systematic tendency to focus on the beginning and end of a structured input block, a predictable pattern rather than a random flaw. This core insight allowed us to propose Attention-Driven Reranking (ADR), a lightweight and training-free framework that transforms this bias from a liability into an asset. By strategically reordering input items to align salient information with the model's natural attention peaks, ADR effectively enhances knowledge utilization. Crucially, as a model-agnostic and parameter-free method, it requires no architectural modifications and seamlessly integrates with existing pipelines. This makes it fully compatible with modern acceleration frameworks, offering a rare combination of improved accuracy and high efficiency. We believe this principle of "attention alignment" opens new avenues for research, and we discuss potential limitations and future directions in Appendix~\ref{limitation} to inspire further exploration.

\bibliography{AttnRank}

\clearpage
\appendix
% \section{Heading}
\section{Technical Appendix}
\label{appendix}

The appendix is organized as follows:
\begin{itemize}[leftmargin=*]
  \item \textbf{Sec.~\ref{sec:appendix-exps-implement}: Implementation of experiments}.  
  
  \item \textbf{Sec.~\ref{sec:appendix-more-attn}: Multi‑document attention distributions}.  
    Attention distributions across models with varying numbers of input documents.

  \item \textbf{Sec.~\ref{main proof}: Theoretical analysis of attention‑guided reranking}, comprising:
    \begin{itemize}[leftmargin=1em]
      \item Sec. \ref{e1} Optimization objective and problem statement  
      \item Sec. \ref{e2} Key assumptions
      \item Sec. \ref{e3} Technical lemmas on hidden‑state decomposition and logit formation  
      \item Sec. \ref{e4} Main proposition
      \item Sec. \ref{e5} Corollaries on optimal document positioning and layer‑wise effects  
      \item Sec. \ref{e6} Connection between our theory and \ours

    \end{itemize}

  % \item \textbf{Sec.~\ref{sec:appendix-layer1-attn}: First‑layer attention analysis}.  
  %   The attention distribution pattern in the first layer of LLMs and its connection to the lost-in-the-middle phenomenon.

  \item \textbf{Sec.~\ref{how many}: Data volume requirements}.  
    Convergence analysis of data requirements for reliably characterizing LLM attention distribution patterns.
\item \textbf{Sec.~\ref{add exp}: Supplementary experiments and case studies.}
 \item \textbf{Sec.~\ref{limitation}: Limitation and future works}. 

\end{itemize}

\section{Implementation of experiments}
\label{sec:appendix-exps-implement}

\paragraph{Advantages of Our Method}
The \ours framework is designed to be both effective and practical, offering several key advantages:

\begin{itemize}
    \item \textbf{Training-Free and Model-Agnostic:} \ours requires no modification to the LLM's architecture or parameters. It treats the model as a black box, making it universally applicable to any Transformer-based LLM.

    \item \textbf{Extremely Low Overhead:} The primary cost is a \textbf{one-time profiling step}, which, as we've shown, requires a minimal number of inference runs. Once the attention profile is saved, it can be reused for all future inference tasks for that model at virtually no cost. The reranking itself is a simple array permutation, which is computationally negligible.

    \item \textbf{Compatibility with Modern Acceleration Frameworks:} Because \ours is an input-preprocessing step that operates before the main generation pass, it is fully compatible with popular inference acceleration libraries like \textbf{Flash Attention} and serving frameworks like \textbf{vLLM}. It does not interfere with their internal optimizations, allowing users to gain performance from our method while retaining the benefits of these high-speed tools.
\end{itemize}

\paragraph{Experimental setup}
All experiments were conducted on the same hardware configuration, as detailed in Table~\ref{tab:env_config}. In the first stage of \ours, we use the \texttt{AutoModelForCausalLM} function from the HuggingFace Transformers library to load the model, perform inference, and extract attention distributions. In the second stage, we adopt the VLLM framework \cite{kwon2023efficient} to accelerate inference.

% 修改后的表格（自动适应页面宽度）
\begin{table}[htbp]
  \centering
  
  \resizebox{\linewidth}{!}{%
  \begin{tabular}{@{} cc @{}} 
    \toprule
    \textbf{Architecture}           & x86\_64                               \\
    \midrule
    \textbf{CPU}                    & Intel Xeon Gold 5218R @ 2.10GHz       \\
    \textbf{GPU}                    & NVIDIA GeForce RTX 3090 24GB $\times$ 10 \\
    \textbf{CUDA Toolkit}           & 11.3                                  \\
    \textbf{Operating System}       & Ubuntu 20.04                          \\
    \textbf{Programming Language}   & Python 3.9.18                         \\
    \textbf{Deep Learning Framework} & PyTorch 1.13.0                        \\
    \bottomrule
  \end{tabular}%
  }
  \caption{Experimental Environment Configuration}
  \label{tab:env_config}
\end{table}

\paragraph{Utilized LLMs}
\label{LLMs}
\begin{itemize}[leftmargin=*]
  \item \textbf{DeepSeek-R1-Distill-Llama-8B} \cite{guo2025deepseek}: An 8B-parameter model distilled from DeepSeek-R1, fine-tuned via reinforcement learning to enhance reasoning capabilities.

  \item \textbf{DeepSeek-LLM} \cite{bi2024deepseek}: A 7B parameter models trained on 2 trillion tokens, utilizing a pre-norm decoder-only Transformer architecture with grouped-query attention (GQA).

  \item \textbf{LLaMA-3 Series} \cite{grattafiori2024llama}: Meta's LLaMA 3 series includes 8B and 70B parameter models trained on 15 trillion tokens.The instruct variant is fine-tuned to enhance instruction-following capabilities.

  \item \textbf{LLaMA-2-13B-hf} \cite{touvron2023llama}: A 13B parameter decoder-only Transformer model, trained on 2 trillion tokens, featuring a 4,096-token context window optimized for general-purpose language tasks.

  \item \textbf{Mistral-7B-Instruct} \cite{2023arXiv231006825J}: A 7B parameter model fine-tuned for instruction following, employing grouped-query attention and sliding window attention mechanisms to handle long sequences.

  \item \textbf{InternLM3-8B-Instruct} \cite{cai2024internlm2}: An 8B parameter model fine-tuned to follow instructions, designed to perform effectively across a variety of natural language understanding tasks.

  \item \textbf{Qwen 2.5 Series} \cite{yang2024qwen2}: Developed by Alibaba, the Qwen 2.5 series includes models with 1.5B, 3B, and 7B parameters, trained on extensive datasets to support multilingual capabilities and demonstrate strong performance across diverse tasks.

  \item \textbf{Code LLaMA 7B} \cite{roziere2023code}: A code-specialized model fine-tuned from LLaMA 2, supporting code generation tasks across multiple programming languages.
\end{itemize}

\paragraph{Utilized Datasets}

\begin{itemize} 
    \item \textbf{HotpotQA} \cite{yang2018hotpotqa} is a large-scale dataset designed to facilitate research in multi-hop question answering. It comprises approximately 113,000 question-answer pairs, each requiring reasoning over multiple Wikipedia articles to derive the correct answer.
    \item \textbf{2WikiMultiHopQA} \cite{ho2020constructing} is a dataset constructed to assess the comprehensive reasoning abilities of question answering models. It contains question-answer pairs that require multi-hop reasoning over both structured data from Wikidata and unstructured text from Wikipedia. 
    \item \textbf{MultiWOZ} \cite{eric2020multiwoz,ye2022multiwoz} series are multi-domain task-oriented dialogue datasets with annotated dialogue states. Each dialogue covers 1 to 5 domains, with an average of 13.7 turns per dialogue.
\end{itemize}

\paragraph{Prompts} As shown in Figure~\ref{fig:prompt} and Figure~\ref{fig:fewshot-prompt}, we present the prompt templates used in our experiments. Retrieved documents and contextual examples are inserted into \texttt{Document [1]} and \texttt{Example [1]} positions after reordering by different strategies to compare their impact on performance.

\begin{figure}[htbp]
    \centering
    \includegraphics[width=1\linewidth]{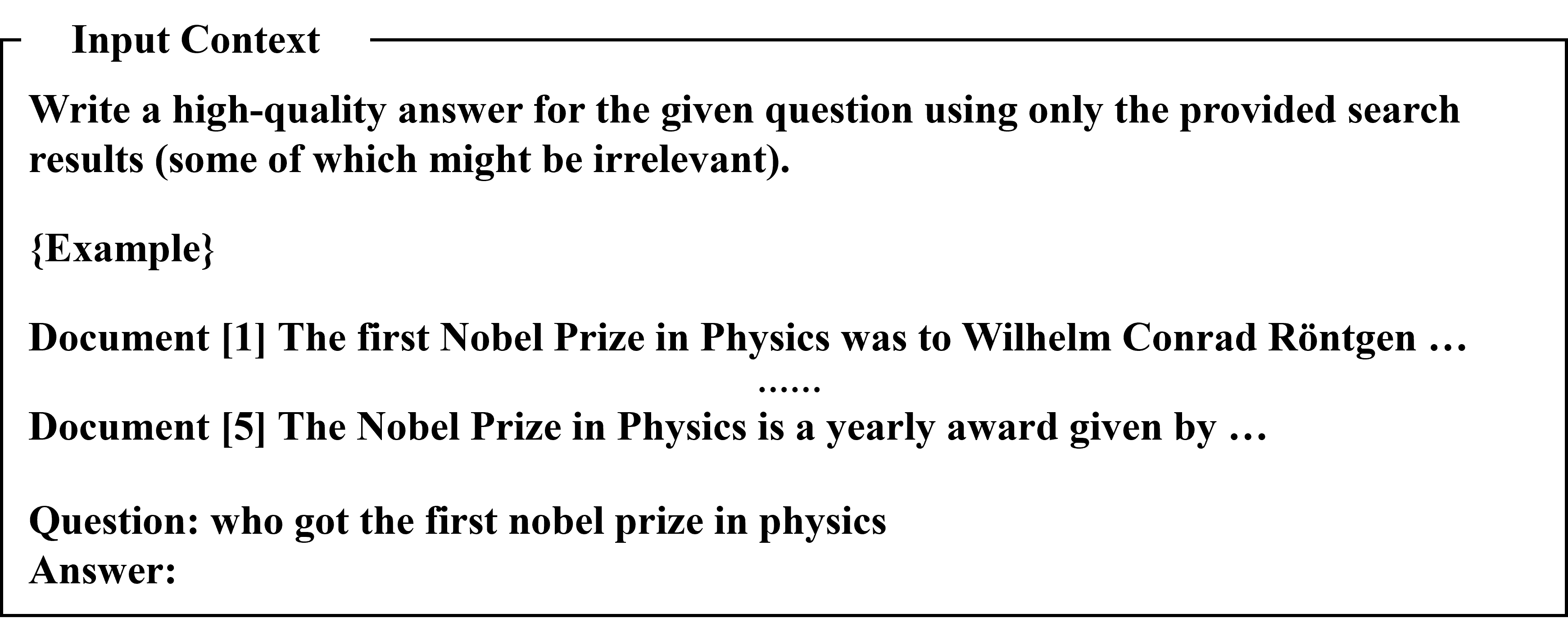}
    \caption{Prompt for multi-hop QA experiment.}
    \label{fig:prompt}
\end{figure}

\begin{figure}[htbp]
    \centering
    \includegraphics[width=1\linewidth]{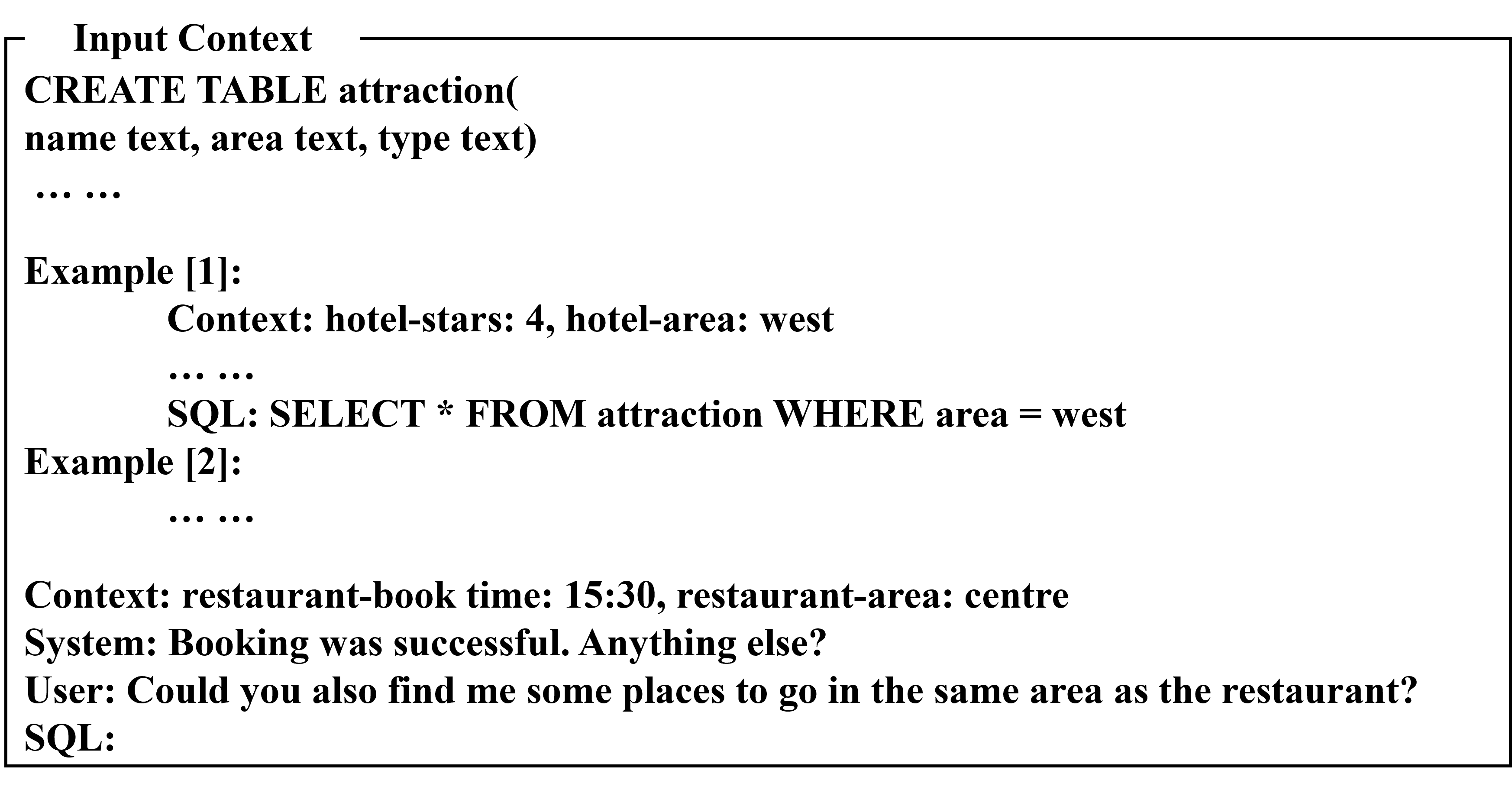}
    \caption{Prompt for dew-shot experiment.}
    \label{fig:fewshot-prompt}
\end{figure}

As shown in Figure~\ref{fig:nostruct_prompt}, we provide the prompt template used in the analysis with disrupted input structure in Section~\ref{sec:observation} . Twenty documents are randomly shuffled and inserted, with all punctuation removed and uppercase letters converted to lowercase to destroy structural cues.

\begin{figure}[h]
    \centering
    \includegraphics[width=1\linewidth]{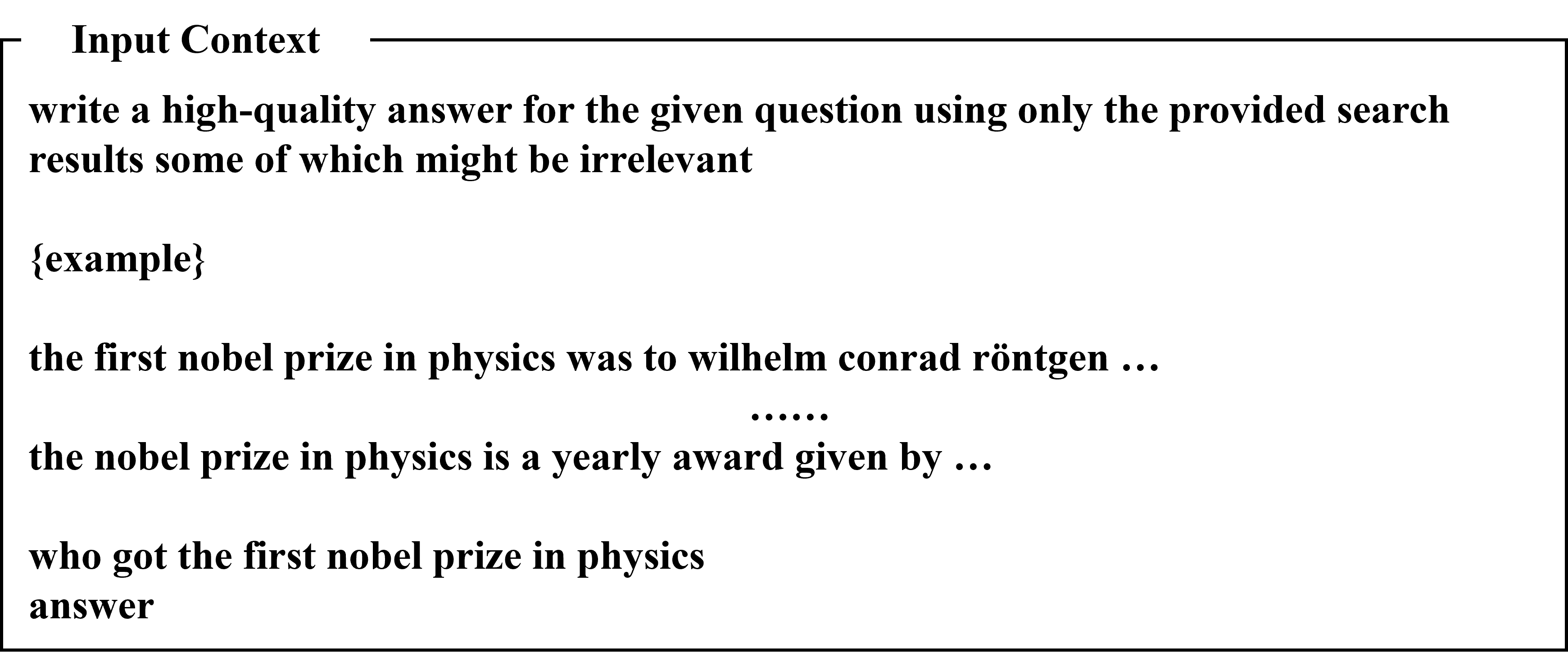}
    \caption{Prompt after disrupting the structural blocks}
    \label{fig:nostruct_prompt}
\end{figure}

\section{More details of how LLMs distribute their attention.}
\label{sec:appendix-more-attn}
Figures~\ref{fig:two_figures1},~\ref{fig:two_figures2}, and~\ref{fig:two_figures3} show the attention distributions across different models and document counts, illustrating that the attention basin phenomenon is consistently observed.

\begin{figure}[ht]
    \centering
    \begin{subfigure}[b]{0.45\textwidth}
        \centering
        \includegraphics[width=\textwidth]{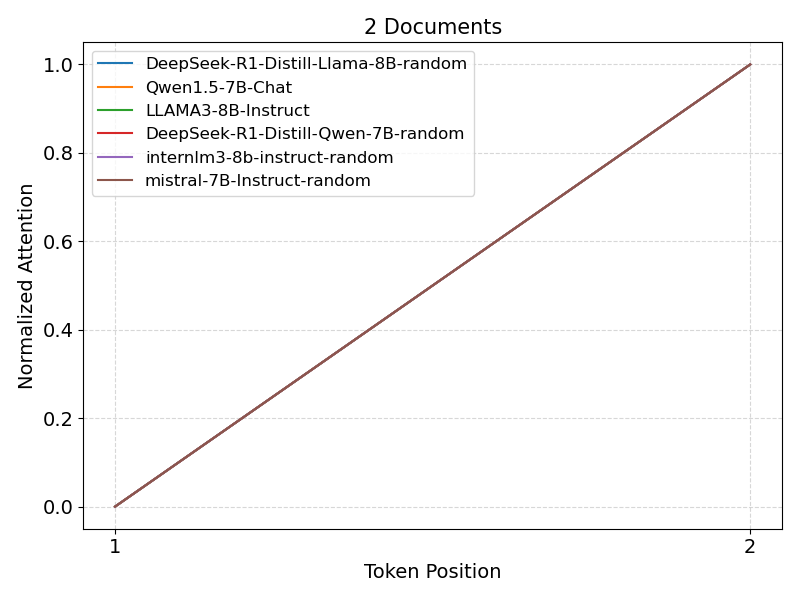}
        \caption{Two documents}
    \end{subfigure}
    \hfill
    \begin{subfigure}[b]{0.45\textwidth}
        \centering
        \includegraphics[width=\textwidth]{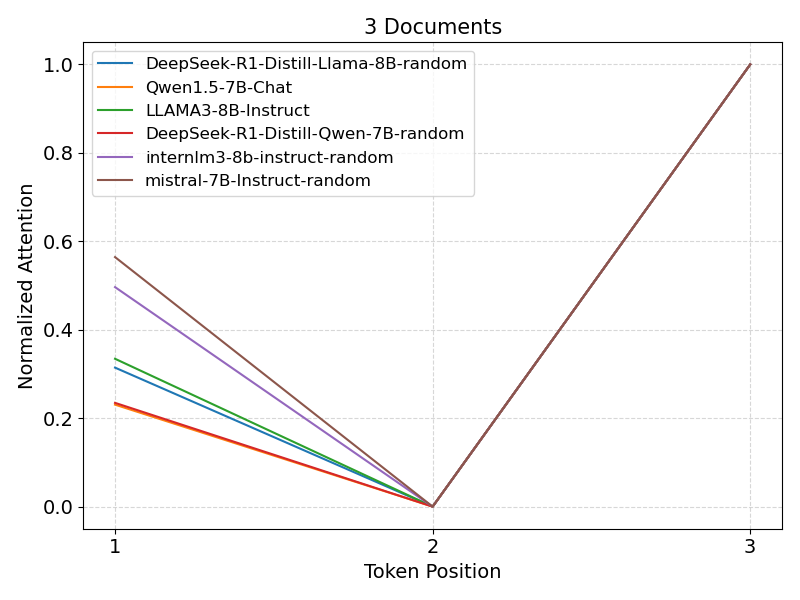}
        \caption{Three documents}
    \end{subfigure}
    \caption{Attention distribution with 2 and 3 documents}
    \label{fig:two_figures1}
\end{figure}

\begin{figure}[!h]
    \centering
    \begin{subfigure}[b]{0.45\textwidth}
        \centering
        \includegraphics[width=\textwidth]{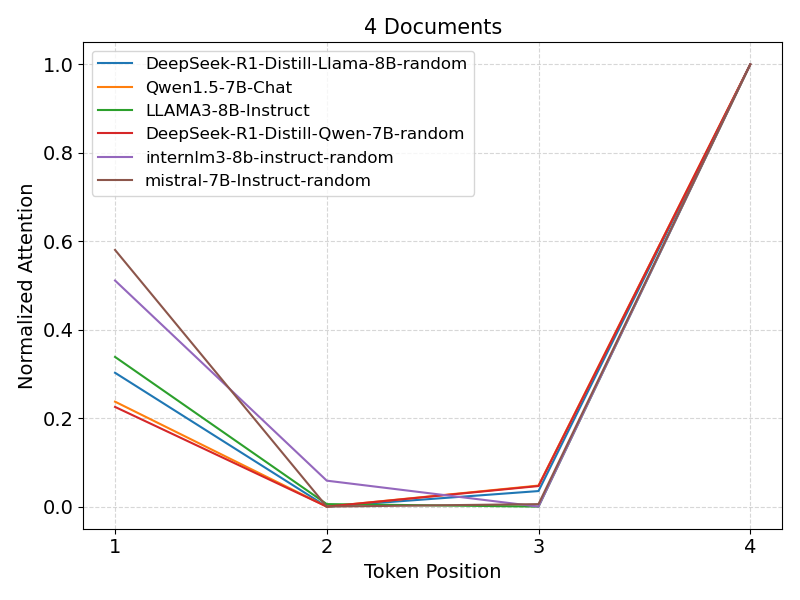}
        \caption{Four documents}
    \end{subfigure}
    \hfill
    \begin{subfigure}[b]{0.45\textwidth}
        \centering
        \includegraphics[width=\textwidth]{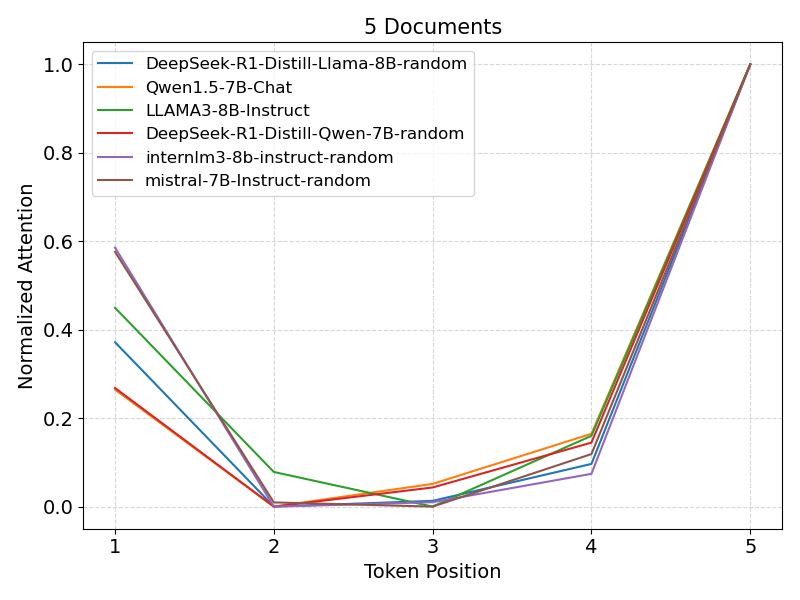}
        \caption{Five documents}
    \end{subfigure}
    \caption{Attention distribution with 4 and 5 documents}
    \label{fig:two_figures2}
\end{figure}

\begin{figure}[!h]
    \centering
    \begin{subfigure}[b]{0.45\textwidth}
        \centering
        \includegraphics[width=\textwidth]{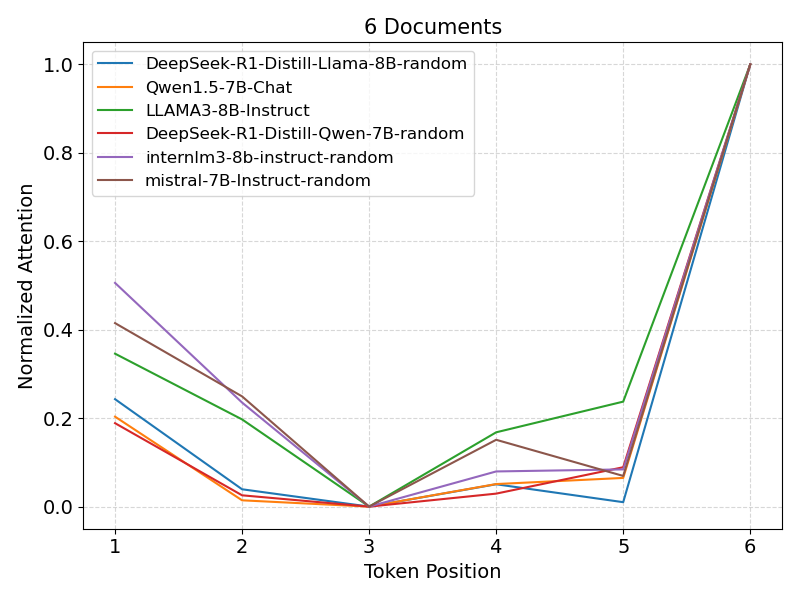}
        \caption{Six documents}
    \end{subfigure}
    \hfill
    \begin{subfigure}[b]{0.45\textwidth}
        \centering
        \includegraphics[width=\textwidth]{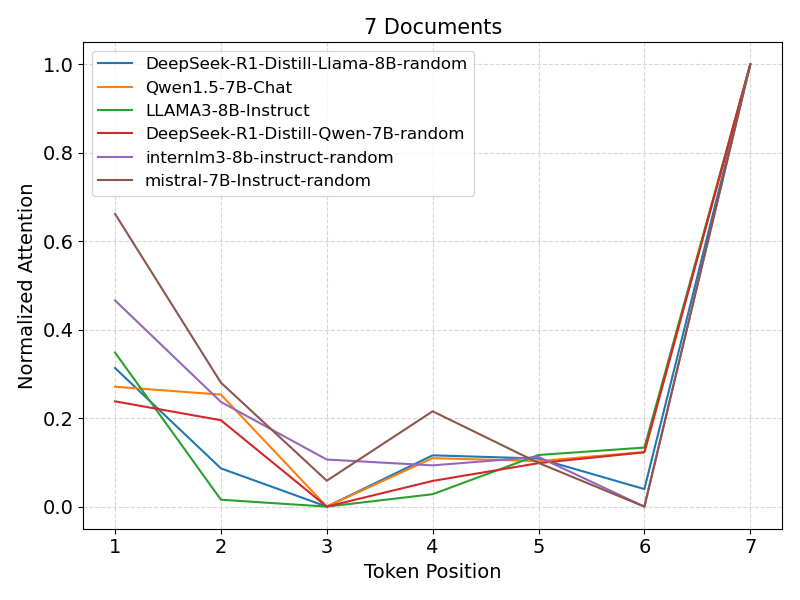}
        \caption{Seven documents}
    \end{subfigure}
    \caption{Attention distribution with 6 and 7 documents}
    \label{fig:two_figures3}
\end{figure}

\section{Theoretical analysis of attention-guided reranking in long-context tasks}
\label{main proof}

\subsection{Optimization objective}
\label{e1}
Given a long-context input $S = \{t, d_1, \ldots, d_n, q\}$, where $t$ is a fixed prompt template, $q$ is the user query, and $D = \{d_1, \ldots, d_n\}$ denotes information segments containing large numbers of tokens, such as knowledge documents in multi-hop QA or in-context examples in few-shot tasks. Let $ d^* $ denote the correct document, we aim to demonstrate that strategically positioning $ d^* $ in high-attention regions of the input sequence increases its cross-layer average attention weight $ \bar{\alpha}_{d^*} $, thereby maximizing the posterior probability $ P(y^*|x,D) $ of generating correct answer $ y^* $. Formally, we seek to prove:
\begin{equation}
    \bar{\alpha}_{d^*} > \bar{\alpha}_{d_j}\ (\forall d_j \neq d^*) \implies P(y^*|x,D) > P(y|d_j,x,D)
\end{equation}

\subsection{Key assumptions}
\label{e2}

\begin{assumption}[Orthogonal Semantic Representation] \label{ass:ortho}
Let $ \{e_{d_k}\} $ denote document-level embeddings. These satisfy pairwise orthogonality:
\begin{equation}
    \langle e_{d_i}, e_{d_j} \rangle = 0\quad \forall i \neq j
\end{equation}
with $ \|e_{d_k}\|_2 = 1 $ for normalization.
\end{assumption}

\begin{assumption}[Position-Attention Coupling] \label{ass:pos-attn}
Let $ \alpha^{(l)}_p $ denote attention weight for position $ p $ at layer $ l $. There exists position-dependent bias:
\begin{equation}
    \mathbb{E}[\alpha^{(l)}_{p}] = f(p) + \epsilon^{(l)}_p
\end{equation}
where $ f: \mathbb{N} \to \mathbb{R}^+ $ is a U-shaped function modeling the attention basin phenomenon, and $ \epsilon^{(l)}_p $ represents position-agnostic content-based attention. 
\end{assumption}

\begin{assumption}[Compositional Weight Binding] \label{ass:composition}
The output projection matrix $ W_y $ decomposes as:
\begin{equation}
    W_y = [E_d \| E_t] \cdot W_c
\end{equation}
where $ E_d $ is the document embedding matrix, $ E_t $ the token embedding matrix, and $ W_c $ a composition matrix satisfying $ \|W_c\|_F \leq \gamma $.
\end{assumption}

\subsection{Technical lemmas}
\label{e3}
\begin{lemma}[Hidden State Composition] \label{lem:composition}
The final hidden state $ h_{\text{last}} $ decomposes into document-aware components:
\begin{equation}
    h_{\text{last}} = h_{\text{init}} + \sum_{k=1}^n \left(\sum_{l=1}^L \alpha^{(l)}_{d_k} v^{(l)}_{d_k}\right) + \Delta h_{\text{noise}}
\end{equation}
where $ \alpha^{(l)}_{d_k} = \sum_{p \in \mathcal{P}(d_k)} \alpha^{(l)}_p $ aggregates position-wise attention for document $ d_k $, and $ v^{(l)}_{d_k} $ represents its value vector at layer $ l $.
\end{lemma}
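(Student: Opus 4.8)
The plan is to establish Lemma~\ref{lem:composition} by unrolling the residual stream of a decoder-only Transformer and tracking how the last-token hidden state accumulates contributions from each document block. First I would write $h_{\text{last}}$ at the final layer $L$ as the sum of the initial embedding $h_{\text{init}}$ (the token/positional embedding of the last query position) and the sequence of residual increments added by each attention sublayer and each MLP sublayer. The MLP contributions and the part of each attention increment that does not depend on document tokens get folded into a single aggregate remainder $\Delta h_{\text{noise}}$; this is legitimate because the residual stream is additive by construction. The substantive step is to isolate, within the attention sublayer at layer $l$, the part of the output that is routed through the key/value positions belonging to document $d_k$.

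Next I would expand the attention sublayer output at layer $l$ as $\sum_p \alpha^{(l)}_p v^{(l)}_p$, where $p$ ranges over all key positions, $\alpha^{(l)}_p$ is the (softmax) attention weight from the last query position to position $p$, and $v^{(l)}_p$ is the value vector at that position. Partitioning the positions by which structural block they belong to — using the index sets $\mathcal{P}(d_k)$ introduced in the lemma — gives $\sum_{k=1}^n \sum_{p \in \mathcal{P}(d_k)} \alpha^{(l)}_p v^{(l)}_p$ plus the contribution of the template and query positions, which again goes into $\Delta h_{\text{noise}}$. Defining the block-aggregated weight $\alpha^{(l)}_{d_k} = \sum_{p \in \mathcal{P}(d_k)} \alpha^{(l)}_p$ and the attention-weighted block value $v^{(l)}_{d_k} = \frac{1}{\alpha^{(l)}_{d_k}} \sum_{p \in \mathcal{P}(d_k)} \alpha^{(l)}_p v^{(l)}_p$ lets me rewrite this exactly as $\sum_{k=1}^n \alpha^{(l)}_{d_k} v^{(l)}_{d_k}$. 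Summing over all layers $l = 1, \ldots, L$ and commuting the two finite sums yields the stated decomposition $h_{\text{last}} = h_{\text{init}} + \sum_{k=1}^n \big(\sum_{l=1}^L \alpha^{(l)}_{d_k} v^{(l)}_{d_k}\big) + \Delta h_{\text{noise}}$.

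I would then note two points needed for the decomposition to be meaningful downstream. First, the $v^{(l)}_{d_k}$ should be treated as effective value vectors that already absorb the post-attention output projection $W_O^{(l)}$ (and, if one wants to be careful, the subsequent layer-norm gain can be linearized at the relevant operating point); this keeps the expression linear in the $\alpha^{(l)}_{d_k}$, which is what later lemmas on logit formation will exploit. Second, cross-layer entanglement — a layer-$l$ document contribution feeding the keys/values of layer $l' > l$ — is what justifies collecting higher-order mixing terms into $\Delta h_{\text{noise}}$; under the semi-orthogonality of document representations (Assumption~\ref{ass:ortho}) these cross terms are small relative to the first-order block contributions, so $\Delta h_{\text{noise}}$ is a controlled remainder rather than an arbitrary catch-all.

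The main obstacle is precisely this last point: a fully rigorous statement requires quantifying $\|\Delta h_{\text{noise}}\|$ and arguing it is negligible compared to $\|\sum_l \alpha^{(l)}_{d_k} v^{(l)}_{d_k}\|$, which in turn depends on the MLP nonlinearities and the recursive dependence of later-layer values on earlier-layer outputs. I would handle this by stating the decomposition as an exact identity with $\Delta h_{\text{noise}}$ defined as the residual, and then separately bounding it under Assumptions~\ref{ass:ortho} and~\ref{ass:composition} — using orthogonality to kill inter-document cross terms and the Frobenius bound $\|W_c\|_F \le \gamma$ to control the composition — so that the subsequent propositions only rely on the leading, document-attention-linear part of $h_{\text{last}}$.
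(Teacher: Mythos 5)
Your proposal follows essentially the same route as the paper's proof: unroll the residual stream so that $h_{\text{last}} = h^{(0)} + \sum_{m}\mathrm{Attn}^{(m)}(h^{(m-1)})$, decompose each attention output into per-document blocks via the position partition $\mathcal{P}(d_k)$, and invoke Assumption~\ref{ass:ortho} to relegate cross-document (and cross-layer) mixing to the remainder term. You are in fact more explicit than the paper --- in particular your definition of $v^{(l)}_{d_k}$ as the attention-weighted average over $\mathcal{P}(d_k)$ makes the per-layer identity exact, and you name the MLP and output-projection contributions that the paper silently folds into $\Delta h_{\text{noise}}$ --- but the underlying argument is the same.
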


\begin{proof}
Through residual connections, each layer's output accumulates document-specific contributions:
\begin{equation}
    h^{(l)} = h^{(l-1)} + \text{Attn}^{(l)}(h^{(l-1)}) = h^{(0)} + \sum_{m=1}^l \text{Attn}^{(m)}(h^{(m-1)})
\end{equation}
Decompose attention heads into document-level components:
\begin{equation}
    \text{Attn}^{(m)} = \sum_{k=1}^n \alpha^{(m)}_{d_k} v^{(m)}_{d_k} + \text{CrossDoc}^{(m)}
\end{equation}

Under Assumption \ref{ass:ortho}, cross-document terms $ \text{CrossDoc}^{(m)} $ become negligible due to orthogonality, yielding the stated decomposition.
\end{proof}

\begin{lemma}[Logit Formation] \label{lem:logit}
The logit for answer token $ y^* $ decomposes as:
\begin{equation}
    \text{logit}(y^*) = \underbrace{\langle h_{\text{last}}, e_{d^*} \rangle}_{\text{document term}} + \underbrace{\langle h_{\text{last}}, e_{y^*} \rangle}_{\text{token term}} + b_{y^*}
\end{equation}
where $ e_{d^*} $ and $ e_{y^*} $ are orthogonal components from Assumption \ref{ass:composition}.
\end{lemma}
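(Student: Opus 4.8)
The plan is to unfold the definition of the output logit and then substitute the structural decomposition of $W_y$ granted by Assumption~\ref{ass:composition}. For a decoder-only Transformer the pre-softmax score of the answer token $y^*$ is $\mathrm{logit}(y^*) = \langle h_{\mathrm{last}}, w_{y^*}\rangle + b_{y^*}$, where $w_{y^*}$ is the column of the unembedding matrix $W_y$ indexed by $y^*$ and $b_{y^*}$ the corresponding bias (taken to be $0$ in most architectures but retained here for generality). So the entire claim reduces to showing that $w_{y^*}$ splits as $w_{y^*} = e_{d^*} + e_{y^*}$ with $e_{d^*} \in \mathrm{span}(E_d)$, $e_{y^*} \in \mathrm{span}(E_t)$, and $e_{d^*} \perp e_{y^*}$.

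Next I would invoke $W_y = [E_d \,\|\, E_t]\,W_c$ from Assumption~\ref{ass:composition}. Writing $W_c$ in the block form conforming to the concatenation $[E_d \| E_t]$ — an upper block $W_c^{(d)}$ multiplying $E_d$ and a lower block $W_c^{(t)}$ multiplying $E_t$ — the $y^*$-column becomes $w_{y^*} = E_d\,W_c^{(d)}[:,y^*] + E_t\,W_c^{(t)}[:,y^*]$. Define $e_{d^*} := E_d\,W_c^{(d)}[:,y^*]$ and $e_{y^*} := E_t\,W_c^{(t)}[:,y^*]$; by construction these lie in the document and token subspaces respectively. The subscript $d^*$ is justified because, for the correct answer token, the composition matrix routes the relevant unembedding weight onto the document-embedding subspace, and Assumption~\ref{ass:ortho} makes $\{e_{d_k}\}$ an orthonormal basis of that subspace, so the coordinate aligned with the correct document $d^*$ is the one that governs the document term; equivalently, one may simply take $e_{d^*} := \Pi_{\mathrm{span}(E_d)}\,w_{y^*}$ and $e_{y^*} := \Pi_{\mathrm{span}(E_t)}\,w_{y^*}$, which coincide with the above whenever the two spans are orthogonal. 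Substituting back gives $\mathrm{logit}(y^*) = \langle h_{\mathrm{last}}, e_{d^*}\rangle + \langle h_{\mathrm{last}}, e_{y^*}\rangle + b_{y^*}$, i.e.\ the stated decomposition.

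To pin down the orthogonality clause I would make explicit the mild geometric separability already implicit in Assumption~\ref{ass:composition} — namely $\langle E_d a, E_t b\rangle = 0$ for all $a,b$, so that the "document subspace" and "token subspace" are orthogonal — from which $e_{d^*}\perp e_{y^*}$ is immediate and the two inner products in the decomposition become exactly the orthogonal-projection coordinates of $h_{\mathrm{last}}$ that Lemma~\ref{lem:composition} and the subsequent monotonicity proposition will exploit. The algebra here is a one-line substitution; the real obstacle is the \emph{modeling} justification, i.e.\ arguing that the unembedding weight of the correct token genuinely decomposes this cleanly — that $W_c$ truly carries a document-indexed block and that the relevant block-column concentrates on $d^*$. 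I would handle this honestly: state it as the content of Assumption~\ref{ass:composition} (augmented with the span-orthogonality condition) rather than derive it, and note that this "two orthogonal components" picture is precisely the structural hypothesis that makes the downstream attention-probability monotonicity argument go through.
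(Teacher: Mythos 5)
Your proposal takes essentially the same route as the paper: both unfold $\mathrm{logit}(y^*)$ as an inner product with the $y^*$-indexed weight of $W_y$, substitute the block decomposition $W_y = [E_d \,\|\, E_t]\,W_c$ from Assumption~\ref{ass:composition}, and read off the document and token alignment terms. You are in fact more careful than the paper's two-line proof, which silently assumes both the orthogonality of $\mathrm{span}(E_d)$ and $\mathrm{span}(E_t)$ and the identification of the document component with $d^*$ specifically --- gaps you correctly flag as modeling assumptions rather than derivable facts.
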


\begin{proof}
Using Assumption \ref{ass:composition}, the output projection becomes:
\begin{equation}
    W_y h_{\text{last}} = [E_d \| E_t] W_c h_{\text{last}} = E_d (W_c^{(d)} h_{\text{last}}) + E_t (W_c^{(t)} h_{\text{last}})
\end{equation}

Thus for target token $ y^* $ embedded as $ e_{y^*} = E_t[i] $, the logit contains separate document and token alignment terms.
\end{proof}

\subsection{Main proposition}
\label{e4}
\begin{proposition}[Attention-Probability Monotonicity]
\label{prop:mono}
For documents $ d^* $ and $ d_j $ with average attention weights $ \bar{\alpha}_{d^*} = \frac{1}{L}\sum_{l=1}^L \alpha^{(l)}_{d^*} $ and $ \bar{\alpha}_{d_j} $, if $ \bar{\alpha}_{d^*} > \bar{\alpha}_{d_j} $, then:
\begin{equation}
    \frac{\partial P(y^*|x,D)}{\partial \bar{\alpha}_{d^*}} > \frac{\partial P(y^*|x,D)}{\partial \bar{\alpha}_{d_j}} \geq 0
\end{equation}
\end{proposition}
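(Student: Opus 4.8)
The plan is to obtain the two partial derivatives by the chain rule through three maps: the vocabulary softmax that defines $P(y^*\mid x,D)$, the affine logit map of Lemma~\ref{lem:logit}, and the residual-stream expansion of Lemma~\ref{lem:composition}. I would first fix the convention that perturbing $\bar\alpha_{d}$ shifts the per-layer weights $\{\alpha^{(l)}_{d}\}_{l}$ coherently (a common additive perturbation), so that Lemma~\ref{lem:composition} gives $\partial h_{\text{last}}/\partial\bar\alpha_{d}=\sum_{l=1}^{L}v^{(l)}_{d}=:L\bar v_{d}$; under Assumption~\ref{ass:ortho} the cross-document terms and $\Delta h_{\text{noise}}$ are second order in the perturbation and can be dropped from the derivative.

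Composing with Lemma~\ref{lem:logit},
\begin{equation}
\frac{\partial\,\mathrm{logit}(y^*)}{\partial\bar\alpha_{d}} = L\left(\langle \bar v_{d},e_{d^*}\rangle+\langle \bar v_{d},e_{y^*}\rangle\right) =: g_{d},
\end{equation}
and I would then separate the two regimes. For $d=d^*$ the value vectors of the gold document carry its own semantics, so $\langle \bar v_{d^*},e_{d^*}\rangle>0$, and since $y^*$ is the answer read off from $d^*$ they also carry a positive component along $e_{y^*}$; hence $g_{d^*}>0$ and is of the order of the value-vector norm. For any $d_j\neq d^*$, Assumption~\ref{ass:ortho} forces $\langle \bar v_{d_j},e_{d^*}\rangle=0$, while $|\langle \bar v_{d_j},e_{y^*}\rangle|$ is bounded through the composition constant $\gamma$ of Assumption~\ref{ass:composition} and is nonnegative whenever $d_j$ does not contradict $y^*$; therefore $0\le g_{d_j}<g_{d^*}$.

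Finally I would differentiate the softmax: with $g^{(v)}_{d}:=\partial\,\mathrm{logit}(v)/\partial\bar\alpha_{d}$ and $g^{(y^*)}_{d}=g_{d}$, the Jacobian identities $\partial P(y^*)/\partial\,\mathrm{logit}(y^*)=P(y^*)(1-P(y^*))$ and $\partial P(y^*)/\partial\,\mathrm{logit}(v)=-P(y^*)P(v)$ for $v\neq y^*$ yield
\begin{equation}
\frac{\partial P(y^*\mid x,D)}{\partial\bar\alpha_{d}} = P(y^*)\left(g_{d}-\mathbb{E}_{v\sim P}\big[g^{(v)}_{d}\big]\right).
\end{equation}
Because orthogonality localizes $\bar v_{d}$'s alignment to $e_{d}$ and to the tokens physically contained in $d$, and $P$ places essentially no mass on tokens sourced specifically from a non-gold $d_j$, one has $\mathbb{E}_{v\sim P}[g^{(v)}_{d}]\approx P(y^*)g_{d}$, so $\partial P(y^*\mid x,D)/\partial\bar\alpha_{d}\approx P(y^*)(1-P(y^*))\,g_{d}$; since the prefactor $P(y^*)(1-P(y^*))>0$ is common to both documents and $g_{d^*}>g_{d_j}\ge 0$, the stated ordering $\partial P(y^*)/\partial\bar\alpha_{d^*}>\partial P(y^*)/\partial\bar\alpha_{d_j}\ge 0$ follows.

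The main obstacle is not the softmax bookkeeping but justifying the passage from the embedding-level Assumption~\ref{ass:ortho} to the statements actually used about the layer-wise value vectors $v^{(l)}_{d_k}$ — namely that $\bar v_{d_k}$ lies, up to lower-order noise, in the span of $e_{d_k}$ together with an $e_{y^*}$-component that is present only when $d_k$ is the source of the answer — and bounding the cross-token contribution $\mathbb{E}_{v\sim P}[g^{(v)}_{d}]$ tightly enough that it cannot flip the sign of $g_{d}-\mathbb{E}_{v\sim P}[g^{(v)}_{d}]$. Making the coherent per-layer perturbation convention precise enough to license $\partial h_{\text{last}}/\partial\bar\alpha_{d}=\sum_{l}v^{(l)}_{d}$ is the remaining technical point.
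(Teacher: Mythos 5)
Your proposal follows essentially the same route as the paper's own proof: chain rule through the residual-stream decomposition (Lemma~\ref{lem:composition}), the logit split of Lemma~\ref{lem:logit}, and the softmax derivative, arriving at a gradient of the form $P(y^*)(1-P(y^*))\cdot g_d$ with $g_{d^*}>0$ driven by value--embedding alignment. You are in fact somewhat more careful than the paper on two points: you carry the full softmax Jacobian (including the cross-terms $-P(y^*)P(v)$) rather than silently keeping only the diagonal entry, and you explicitly name the real gap, namely the unjustified passage from the embedding-level orthogonality of Assumption~\ref{ass:ortho} to claims about the layer-wise value vectors $v^{(l)}_{d_k}$ --- a gap the paper's proof also has but does not acknowledge. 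The one substantive divergence is the sign of the $d_j$ derivative: you argue $g_{d_j}\ge 0$ so that $\partial P/\partial\bar\alpha_{d_j}\ge 0$, matching the inequality as stated in the proposition, whereas the paper's proof derives $\partial P/\partial\bar\alpha_{d_j}=-P(1-P)\kappa_j\le 0$ and then compares against $\left|\partial P/\partial\bar\alpha_{d_j}\right|$ (the form used in the main-text version of the proposition). Neither sign is rigorously forced by the stated assumptions; your reading is the one consistent with the appendix statement you were asked to prove, while the paper's is consistent with its main-text restatement, so this is a discrepancy internal to the paper rather than an error on your part.
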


\begin{proof} 

\textbf{Step 1: Document-term dominance}\\
From Lemma \ref{lem:composition} and \ref{lem:logit}, the document alignment term dominates when $ d^* $ contains sufficient answer evidence:
\begin{equation}
    \langle h_{\text{last}}, e_{d^*} \rangle = \bar{\alpha}_{d^*} \underbrace{\left\langle \frac{1}{L}\sum_{l=1}^L v^{(l)}_{d^*}, e_{d^*} \right\rangle}_{\kappa} + \mathcal{O}(\max_{k \neq *} \bar{\alpha}_{d_k})
\end{equation}
where $ \kappa > 0 $ due to value-key alignment in transformers.

\textbf{Step 2: Probability gradient analysis}\\
The output probability computes as:
\begin{equation}
    P(y^*|x,D) = \frac{\exp(\text{logit}(y^*))}{\sum_y \exp(\text{logit}(y))}
\end{equation}

Taking partial derivative with respect to $ \bar{\alpha}_{d^*} $:
\begin{equation}
    \frac{\partial P}{\partial \bar{\alpha}_{d^*}} = P(y^*|x,D)(1 - P(y^*|x,D)) \kappa > 0
\end{equation}

Similarly, $ \frac{\partial P}{\partial \bar{\alpha}_{d_j}} = -P(1-P)\kappa_j \leq 0 $ for $ j \neq * $.

\textbf{Step 3: Strict monotonicity}\\
Given $ \kappa \propto \|v_{d^*}\| \cos\theta_{v_{d^*},e_{d^*}} $ and Assumption \ref{ass:ortho}, $ \cos\theta = 1 $. Thus:
\begin{equation}
    \bar{\alpha}_{d^*} > \bar{\alpha}_{d_j} \implies \frac{\partial P}{\partial \bar{\alpha}_{d^*}} > \left|\frac{\partial P}{\partial \bar{\alpha}_{d_j}}\right|
\end{equation}

 Hence complete the proof.
\end{proof}

\subsection{Implications for long-context tasks}
\label{e5}
\begin{corollary}[Optimal Document Positioning]
\label{cor:doc-pos}
Let $ \mathcal{P}_{\text{opt}} $ denote positions with maximal attention basin effect in Assumption \ref{ass:pos-attn}. Placing $ d^* $ at $ p^* \in \mathcal{P}_{\text{opt}} $ maximizes $ \bar{\alpha}_{d^*} $, leading to:
\begin{equation}
    \mathbb{E}[P(y^*|x,D)]_{p^*} \geq \mathbb{E}[P(y^*|x,D)]_{p} \quad \forall p \notin \mathcal{P}_{\text{opt}}
\end{equation}
\end{corollary}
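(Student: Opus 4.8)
\textbf{Proof proposal for Corollary~\ref{cor:doc-pos} (Optimal Document Positioning).}

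The plan is to chain together the positional attention model of Assumption~\ref{ass:pos-attn} with the monotonicity result of Proposition~\ref{prop:mono}, using the layer-wise structure from Proposition~\ref{prop:layer-main}. First I would make precise what ``placing $d^*$ at position $p$'' means: the document-level attention mass $\alpha^{(l)}_{d^*} = \sum_{p' \in \mathcal{P}(d^*)} \alpha^{(l)}_{p'}$ is, in expectation, governed by $f$ summed over the token span assigned to $d^*$, so up to the content-agnostic residual we have $\mathbb{E}[\alpha^{(l)}_{d^*}] \approx \sum_{p' \in \mathrm{span}(p)} f(p') + \sum_{p'} \epsilon^{(l)}_{p'}$. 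Taking the cross-layer average kills most of the content term's contribution to the \emph{systematic} part (and by Proposition~\ref{prop:layer-main} the positional component $f$ is the dominant source of variance in the shallow layers that drive the profile), so $\mathbb{E}[\bar\alpha_{d^*}]$ is, to leading order, an increasing function of the accumulated basin value $F(p) := \sum_{p' \in \mathrm{span}(p)} f(p')$ over the occupied positions.

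Next I would argue the maximization step. By definition $\mathcal{P}_{\text{opt}}$ collects the positions where the U-shaped $f$ attains its largest values (the edges of the structural block); hence $F(p^*) \ge F(p)$ for every $p \notin \mathcal{P}_{\text{opt}}$, which gives $\mathbb{E}[\bar\alpha_{d^*}]_{p^*} \ge \mathbb{E}[\bar\alpha_{d^*}]_{p}$. The second ingredient is Proposition~\ref{prop:mono}: $P(y^*\mid x, D)$ is monotone non-decreasing in $\bar\alpha_{d^*}$ (its partial derivative equals $P(1-P)\kappa \ge 0$ with $\kappa > 0$), and moreover increasing $\bar\alpha_{d^*}$ dominates the effect of increasing any competing $\bar\alpha_{d_j}$. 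So a position change that raises $\bar\alpha_{d^*}$ can only raise $P(y^*\mid x,D)$, even accounting for the fact that it simultaneously redistributes attention mass away from the other documents. Composing ``$p^*$ maximizes $\mathbb{E}[\bar\alpha_{d^*}]$'' with ``$P(y^*\mid\cdot)$ is monotone in $\bar\alpha_{d^*}$'' yields $\mathbb{E}[P(y^*\mid x,D)]_{p^*} \ge \mathbb{E}[P(y^*\mid x,D)]_{p}$ for all $p \notin \mathcal{P}_{\text{opt}}$, which is the claim.

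The main obstacle is the interchange of expectation and the nonlinear softmax map: Proposition~\ref{prop:mono} controls $\partial P / \partial \bar\alpha_{d^*}$ pointwise, but the corollary is stated in expectation over the randomness in $\epsilon^{(l)}_p$ (and over inputs), so strictly one needs either (i) a monotone-coupling / stochastic-dominance argument showing the distribution of $\bar\alpha_{d^*}$ at $p^*$ first-order stochastically dominates that at $p$, after which monotonicity of $P$ transfers the dominance to $P(y^*\mid\cdot)$ and hence to its mean; or (ii) a Jensen-type control showing the residual fluctuations are small enough that the leading-order deterministic comparison via $F(p)$ survives taking expectations. I would take route (i), since it only requires that the positional bias $f$ shifts the whole attention distribution rather than just its mean — a hypothesis already implicit in Assumption~\ref{ass:pos-attn} — and it avoids any second-order Taylor bookkeeping. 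A secondary, more benign gap is the suppression of the cross-document $\mathcal{O}(\max_{k\neq *}\bar\alpha_{d_k})$ terms from Step~1 of Proposition~\ref{prop:mono}'s proof: I would note that moving $d^*$ to an edge also tends to push the competing documents toward the low-attention trough, so these correction terms move in the favorable direction and the inequality is, if anything, reinforced.
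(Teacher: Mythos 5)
Your proposal follows essentially the same route as the paper's own proof: invoke Assumption~\ref{ass:pos-attn} to conclude that positions in $\mathcal{P}_{\text{opt}}$ maximize $\mathbb{E}[\bar{\alpha}_{d^*}]$, then transfer this to $P(y^*|x,D)$ via the monotonicity of Proposition~\ref{prop:mono}. The additional care you take — the stochastic-dominance argument for passing the pointwise derivative bound through the expectation over $\epsilon^{(l)}_p$, and the treatment of the $\mathcal{O}(\max_{k\neq *}\bar{\alpha}_{d_k})$ cross-document terms — addresses real gaps that the paper's two-line proof simply asserts away, so your version is a strict refinement of the same argument rather than a different one.
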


\begin{proof}
From Assumption \ref{ass:pos-attn}, positions in $ \mathcal{P}_{\text{opt}} $ maximize $ \mathbb{E}[\alpha^{(l)}_p] $. By the monotonicity in Proposition 1, positioning $ d^* $ at $ p^* $ achieves:
\begin{equation}
    \mathbb{E}[\bar{\alpha}_{d^*}|p^*] = \frac{1}{L}\sum_{l=1}^L f(p^*) + \mathbb{E}[\epsilon^{(l)}_{p^*}] > \frac{1}{L}\sum_{l=1}^L f(p) + \mathbb{E}[\epsilon^{(l)}_p]
\end{equation}
for any $ p \notin \mathcal{P}_{\text{opt}} $. Proposition 1 then guarantees higher $ P(y^*|x,D) $.
\end{proof}

\begin{corollary}[Layer-wise Attention Degradation] \label{cor:layer}
Let $ \rho(l) = \frac{\mathbb{V}[\epsilon^{(l)}_p]}{\mathbb{V}[f(p)]} $ measure the relative strength of content-based vs position-based attention at layer $ l $. There exists a layer depth threshold $ L^* $ such that:
\begin{align*}
\rho(l) &< 1\quad \forall l < L^* \quad \textit{(position-dominated regime)} \\
\rho(l) &\geq 1\quad \forall l \geq L^* \quad \textit{(content-dominated regime)}
\end{align*}
Thus, shallow layers better preserve positional bias patterns from Assumption \ref{ass:pos-attn}, while deeper layers exhibit attenuated positional effects.
\end{corollary}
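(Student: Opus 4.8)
The plan is to reduce Corollary~\ref{cor:layer} to a single monotonicity statement about the variance $\mathbb{V}[\epsilon^{(l)}_p]$ as a function of depth, together with a boundary condition at the output layer. The structural observation that makes this work is that the denominator of $\rho(l)$ does not depend on $l$: by Assumption~\ref{ass:pos-attn}, $f$ is a fixed, non-constant (U-shaped) function of position, so $\mathbb{V}[f(p)] = \sigma_f^2$ is a positive constant. Hence $\rho(l) = \mathbb{V}[\epsilon^{(l)}_p]/\sigma_f^2$, and the two-regime claim is equivalent to saying that the sequence $\mathbb{V}[\epsilon^{(l)}_p]$ crosses the level $\sigma_f^2$ exactly once, from below, as $l$ runs over $\{1,\dots,L\}$.

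First I would prove that $l \mapsto \mathbb{V}[\epsilon^{(l)}_p]$ is non-decreasing. The vehicle is the residual-stream expansion already used in Lemma~\ref{lem:composition}: the content component read at layer $l$ accumulates, $\epsilon^{(l)}_p = \epsilon^{(l-1)}_p + \xi^{(l)}_p$, where $\xi^{(l)}_p$ is the fresh content-mixing increment contributed by block $l$ --- the quantitative face of the Abnar et al.\ ``increasing entanglement'' observation. Expanding the variance of a sum gives
\begin{equation}
\begin{split}
\mathbb{V}[\epsilon^{(l)}_p] &= \mathbb{V}[\epsilon^{(l-1)}_p] + 2\,\mathrm{Cov}\big(\epsilon^{(l-1)}_p,\xi^{(l)}_p\big) + \mathbb{V}[\xi^{(l)}_p] \\
&\ge \mathbb{V}[\epsilon^{(l-1)}_p],
\end{split}
\end{equation}
where the inequality uses the mild regularity condition $\mathrm{Cov}(\epsilon^{(l-1)}_p,\xi^{(l)}_p) \ge -\tfrac12\mathbb{V}[\xi^{(l)}_p]$ (in particular it holds whenever successive increments are non-negatively correlated, e.g.\ uncorrelated). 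Iterating, $\rho(1) \le \rho(2) \le \cdots \le \rho(L)$.

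Next I would pin down the two endpoints. At shallow depth the content stream has barely mixed, so $\mathbb{V}[\epsilon^{(1)}_p]$ is small relative to the hard-wired positional spread $\sigma_f^2$, i.e.\ $\rho(1) < 1$; if instead $\rho(1) \ge 1$ the statement is immediate with $L^* = 1$. At the output layer the logit depends on task-relevant content alignment --- precisely the mechanism of Lemma~\ref{lem:logit} --- so content variance dominates raw positional variance there, giving $\rho(L) \ge 1$. Since $\rho(\cdot)$ is non-decreasing and straddles the value $1$, the discrete intermediate-value principle yields a well-defined first crossing $L^* := \min\{\, l : \rho(l) \ge 1 \,\}$; monotonicity then gives $\rho(l) < 1$ for all $l < L^*$ and $\rho(l) \ge \rho(L^*) \ge 1$ for all $l \ge L^*$, which is exactly the asserted partition. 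The remark that shallow layers best preserve the $f(p)$ pattern follows because for $l < L^*$ the positional term accounts for more than half of the total attention variance.

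The hard part will be the rigorous justification of the monotonicity of $\mathbb{V}[\epsilon^{(l)}_p]$: the clean ``independent increments'' picture is an idealization, and in a real transformer a layer's content increment can be anticorrelated with the running sum (through LayerNorm renormalization, or attention heads that partly overwrite earlier writes). I would therefore present the covariance bound above as an explicit, minimal regularity hypothesis --- folding it into Assumption~\ref{ass:pos-attn} --- and observe that it is exactly what forces the single-crossing conclusion; dropping it would at worst replace the single threshold $L^*$ by a bounded crossing band, leaving the qualitative shallow-versus-deep dichotomy intact. A secondary, milder gap is confirming $\rho(L) \ge 1$ rather than merely $\rho(L) > \rho(1)$, which I would ground in the downstream-utility chain of Lemmas~\ref{lem:composition}--\ref{lem:logit}: since the final logit is driven by content alignment with $e_{d^*}$ and $e_{y^*}$, the content component must carry the bulk of the variance by the last layer.
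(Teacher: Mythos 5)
Your proposal is correct and follows essentially the same route as the paper's own proof: the positional variance $\mathbb{V}[f(p)]$ is depth-independent while the content variance $\mathbb{V}[\epsilon^{(l)}_p]$ grows with depth, so the ratio $\rho(l)$ crosses $1$ once and that crossing defines $L^*$. The only difference is one of rigor --- the paper simply asserts that transformers ``typically exhibit increasing $\mathbb{V}[\epsilon^{(l)}_p]$ with depth'' and solves for the crossing, whereas you supply the residual-accumulation justification, the covariance regularity condition, and the boundary conditions $\rho(1)<1$, $\rho(L)\ge 1$ that the single-crossing argument actually requires.
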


\begin{proof}
From Assumption \ref{ass:pos-attn}, decompose attention variance:
\begin{equation}
    \mathbb{V}[\alpha^{(l)}_p] = \underbrace{\mathbb{V}[f(p)]}_{\text{positional}} + \underbrace{\mathbb{V}[\epsilon^{(l)}_p]}_{\text{content-based}}
\end{equation}

Transformer architectures typically exhibit increasing $ \mathbb{V}[\epsilon^{(l)}_p] $ with depth as self-attention becomes more semantic. Solving $ \mathbb{V}[f(p)] = \mathbb{V}[\epsilon^{(l)}_p] $ yields threshold $ L^* $ where positional effects become subdominant.
\end{proof}
\subsection{Remark: connecting theory to method}
\label{e6}
Our theoretical analysis rigorously justifies the core principle of attention-guided document reranking:
\begin{itemize}[leftmargin=*]
    \item The U-shaped attention basin (Assumption \ref{ass:pos-attn}) explains the \textit{lost-in-the-middle} phenomenon through its positional expectation $ \mathbb{E}[\alpha^{(l)}_p] $
    \item The attention-probability monotonicity (Proposition 1) formally establishes that boosting $ \bar{\alpha}_{d^*} $ via strategic positioning directly increases answer correctness
    \item Corollary \ref{cor:doc-pos} provides theoretical guarantees for our method's effectiveness: reranking documents to place $ d^* $ in $ \mathcal{P}_{\text{opt}} $ (typically sequence edges) maximizes its attention influence
    \item Corollary \ref{cor:layer} reveals the layer-dependent nature of attention guidance: shallow layers' position-dominated regime ($\rho(l)<1$) better preserves document ordering signals critical for reranking, while deeper layers' content-focused attention ($\rho(l)\geq1$) introduces positional ambiguity. This theoretically justifies why using shallower attention maps produces better reranking results.
\end{itemize}

This mathematical foundation not only explains empirical observations but also guides future extensions: the composition matrix $ W_c $ in Assumption \ref{ass:composition} suggests directions for training-based attention shaping, while the orthogonality in Assumption \ref{ass:ortho} motivates improved document encoding schemes to better satisfy theoretical prerequisites.

% \section{Attention distributions in the first layer.}
% \label{sec:appendix-layer1-attn}
% As shown in Figure~\ref{fig:layer1_attention}, the first-layer attention distribution of Qwen models reveals a clear trend: as model size increases, attention becomes more evenly distributed, with earlier documents receiving higher weights. In Qwen-32B, an \textit{attention basin} emerges, where the first document receives more attention than the second.

% Figure~\ref{fig:lim_distribution} shows that smaller models tend to focus on later documents, and the \textit{lost-in-the-middle} \cite{liu2024lost} effect only emerges as model size increases. This aligns closely with our findings and supports the hypothesis that the effect stems from uneven attention allocation across document positions.

% \begin{figure}[!t]
%     \centering
%     \includegraphics[width=1\linewidth]{}
%     \caption{The first-layer attention patterns across models of varying sizes show that smaller models tend to focus more on later documents, while larger models gradually shift attention toward earlier ones. At 32B parameters, an \textit{attention basin} emerges, where the first document receives disproportionately high attention.}
%     \label{fig:layer1_attention}
% \end{figure}
% \begin{figure}[!t]
%     \centering
%     \includegraphics[width=0.7\linewidth]{}
%     \caption{Multi-document QA performance of Llama-2 models of varying sizes (7B, 13B, 70B)\cite{liu2024lost}} 
%     \label{fig:lim_distribution}
% \end{figure}

\begin{figure}[!h]
    \centering
    \begin{subfigure}[b]{0.4\textwidth}
        \centering
        \includegraphics[width=\textwidth]{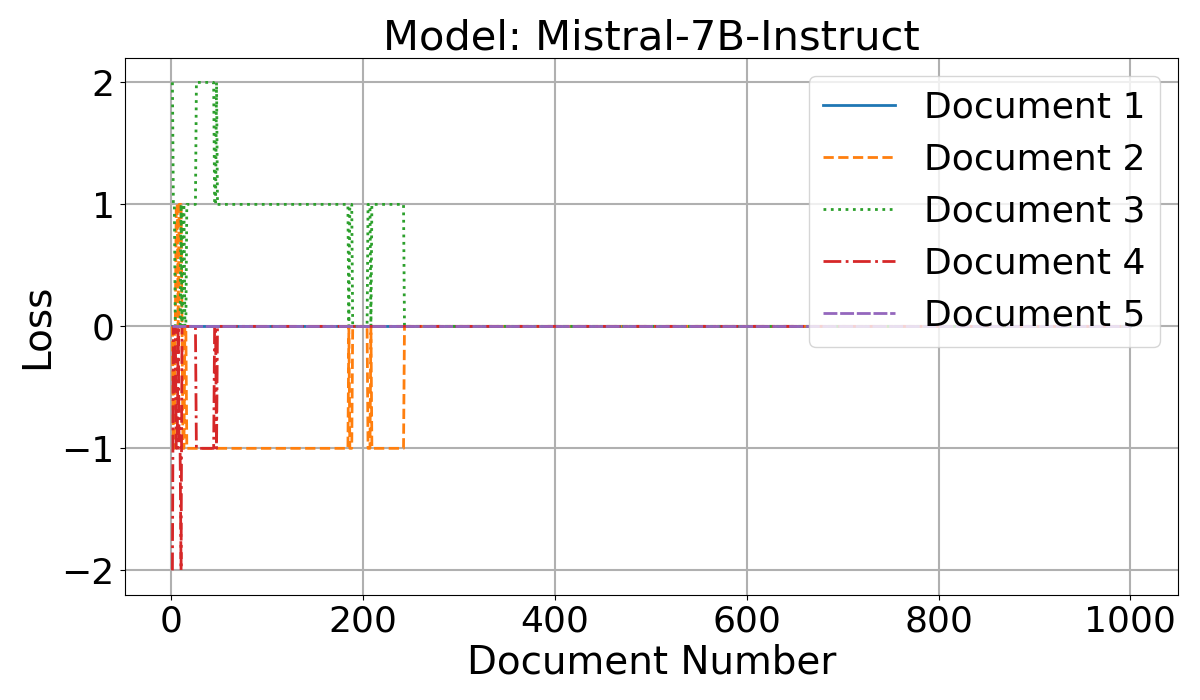}
        \caption{Mistral-7B-Instruct}
    \end{subfigure}
    \hfill
    \begin{subfigure}[b]{0.4\textwidth}
        \centering
        \includegraphics[width=\textwidth]{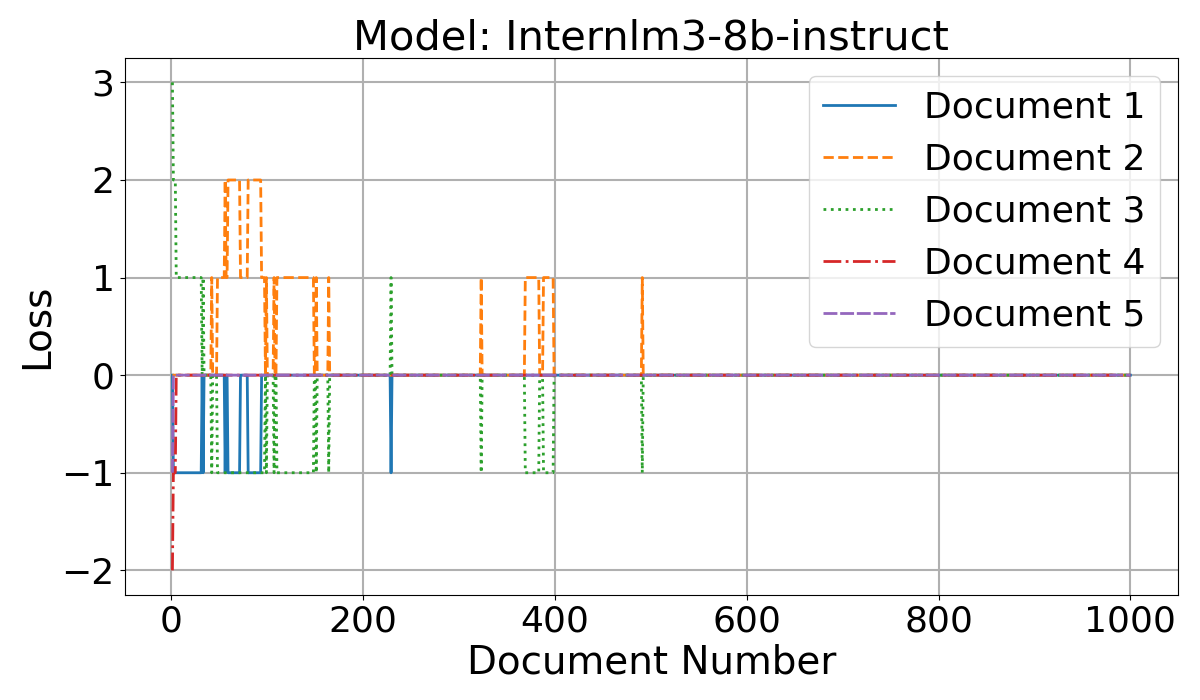}
        \caption{Internlm3-8b-instruct}
    \end{subfigure}
    \caption{The relationship between long-context data volume and attention distribution in LLMs.}
    \label{nums_1}
\end{figure}

\section{How many documents are required?}
\label{how many}
We design an experiment to determine how much data is required for attention patterns to converge. Following the setup in Section~\ref{exp1}, we incrementally increase the number of samples and compare the resulting attention distributions to those from the full dataset. As shown in Figures~\ref{nums_1} and~\ref{nums_2}, all models converge after about 400 samples. Notably, we focus on convergence at the document boundaries—the first (blue) and last (purple) documents. In DeepSeek-LLM and InternLM3-8B-Instruct, boundary attention converges with only ~200 samples. Mistral-7B-Instruct and Qwen2.5-7B match the final pattern from the start. These results suggest that, for some models, a single sample may suffice to approximate full-context attention behavior.

\begin{figure}[h]
    \centering
    \begin{subfigure}[b]{0.4\textwidth}
        \centering
        \includegraphics[width=\textwidth]{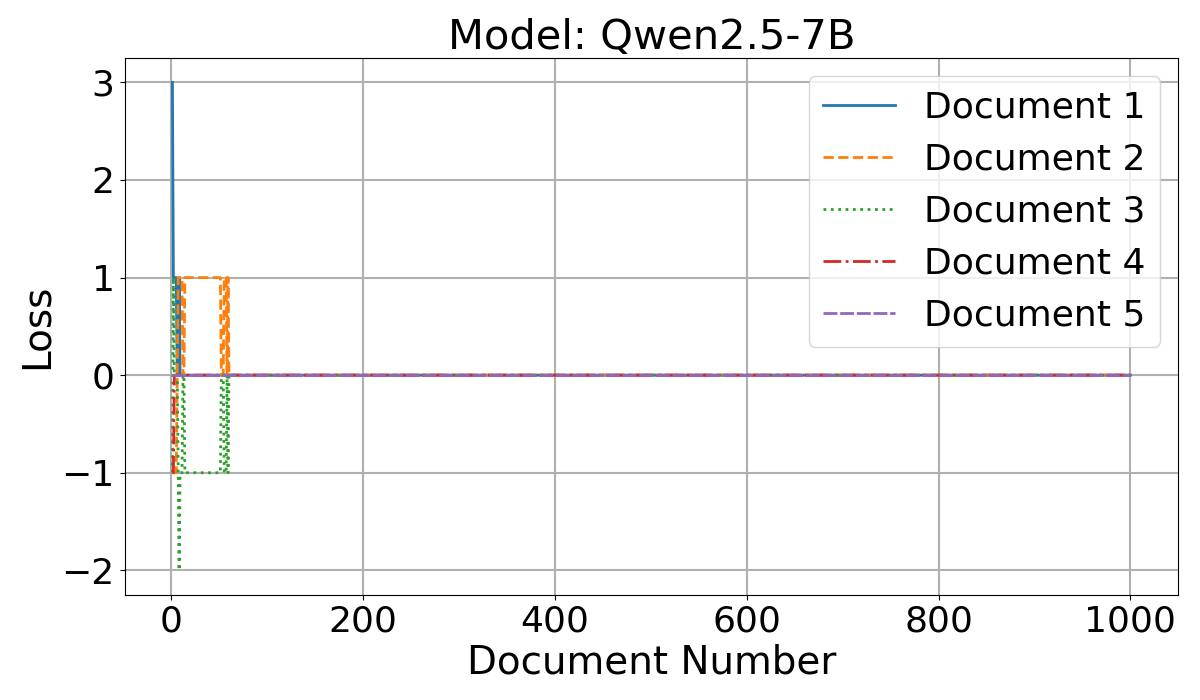}
        \caption{Qwen2.5-7B}
    \end{subfigure}
    \hfill
    \begin{subfigure}[b]{0.4\textwidth}
        \centering
        \includegraphics[width=\textwidth]{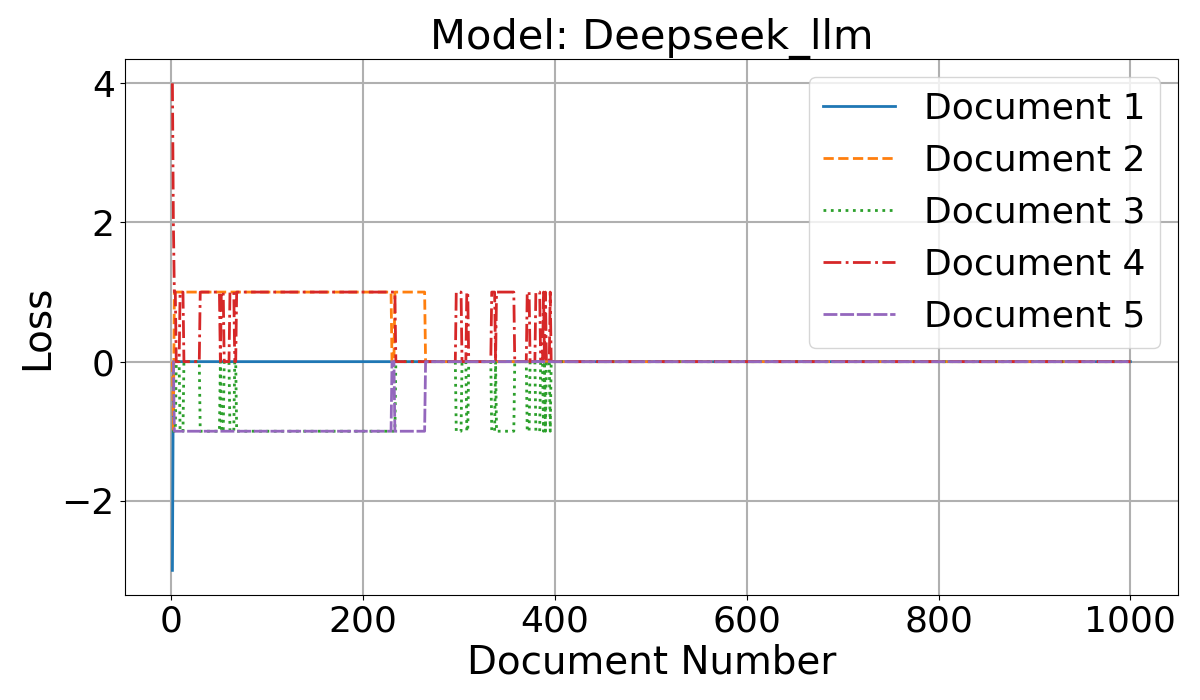}
        \caption{Deepseek-llm}
    \end{subfigure}
    \caption{The relationship between sample size and attention distribution in LLMs.}
    \label{nums_2}
\end{figure}

\section{Attention distribution experiments and case studies}
\label{add exp}

Following the setup in Experiment~\ref{exp1}, we analyze the average attention scores assigned to ground-truth and noise documents under different reranking baselines. As shown in Table~\ref{tab:attention-scores}, \ours assigns the highest average attention score to relevant documents and the lowest to noise ones, demonstrating that \ours effectively guides the model to focus on the most critical documents.

\begin{table*}[!h]
\centering
\caption{Average attention scores on ground-truth and noise documents under different reranking strategies.}
\label{tab:attention-scores}
\resizebox{\textwidth}{!}{%
\begin{tabular}{lccccc|ccccc}
\toprule
 & \multicolumn{5}{c|}{Correct Documents $\uparrow$} & \multicolumn{5}{c}{Wrong Documents $\downarrow$} \\
Model & Random & Descending & Ascending & LIM & Rerank & Random & Descending & Ascending & LIM & \ours \textbf{(ours)} \\
\cmidrule(lr){2-6} \cmidrule(lr){7-11}
DeepSeek-R1-Distill-Llama-8B
  & 0.0182 & 0.0183 & \textbf{0.0194} & 0.0183 & \underline{0.0193}
  & 0.0048 & 0.0048 & \underline{0.0043} & 0.0048 & \textbf{0.0042} \\

LLAMA3-8B-Instruct
  & 0.0212 & 0.0212 & \underline{0.0230} & 0.0214 & \textbf{0.0232}
  & 0.0054 & 0.0054 & \underline{0.0045} & 0.0053 & \textbf{0.0043} \\

LLAMA3-8B
  & 0.0238 & 0.0238 & \textbf{0.0258} & 0.0238 & \underline{0.0256}
  & 0.0062 & 0.0062 & \underline{0.0053} & 0.0062 & \textbf{0.0052} \\

mistral-7B-Instruct
  & 0.0230 & 0.0229 & \underline{0.0238} & 0.0231 & \textbf{0.0244}
  & 0.0059 & 0.0060 & \underline{0.0056} & 0.0059 & \textbf{0.0051} \\

deepseek-llm
  & 0.0389 & 0.0387 & \underline{0.0401} & 0.0386 & \textbf{0.0404}
  & 0.0110 & 0.0111 & \underline{0.0106} & 0.0112 & \textbf{0.0103} \\

internlm3-8b-instruct
  & 0.0177 & 0.0178 & \underline{0.0181} & 0.0177 & \textbf{0.0185}
  & 0.0045 & 0.0046 & \underline{0.0044} & 0.0045 & \textbf{0.0041} \\

qwen 2.5 1.5B
  & 0.0118 & 0.0119 & \textbf{0.0128} & 0.0118 & \underline{0.0127}
  & 0.0030 & 0.0031 & \textbf{0.0026} & 0.0030 & \textbf{0.0026} \\

qwen 2.5 3B
  & 0.0169 & 0.0168 & \underline{0.0170} & 0.0166 & \textbf{0.0183}
  & \underline{0.0043} & \underline{0.0043} & 0.0044 & 0.0044 & \textbf{0.0034} \\

qwen 2.5 7B
  & 0.0283 & 0.0285 & \textbf{0.0305} & 0.0284 & \underline{0.0304}
  & 0.0072 & 0.0072 & \textbf{0.0061} & 0.0072 & \textbf{0.0061} \\

\midrule
\textbf{Average}
  & 0.022200 & 0.022211 & \underline{0.023389} & 0.022189 & \textbf{0.023644}
  & 0.005811 & 0.005856 & \underline{0.005311} & 0.005833 & \textbf{0.005033} \\
\bottomrule
\end{tabular}%
}
\end{table*}

\begin{figure*}[!h]
    \centering
    \includegraphics[width=0.7\linewidth]{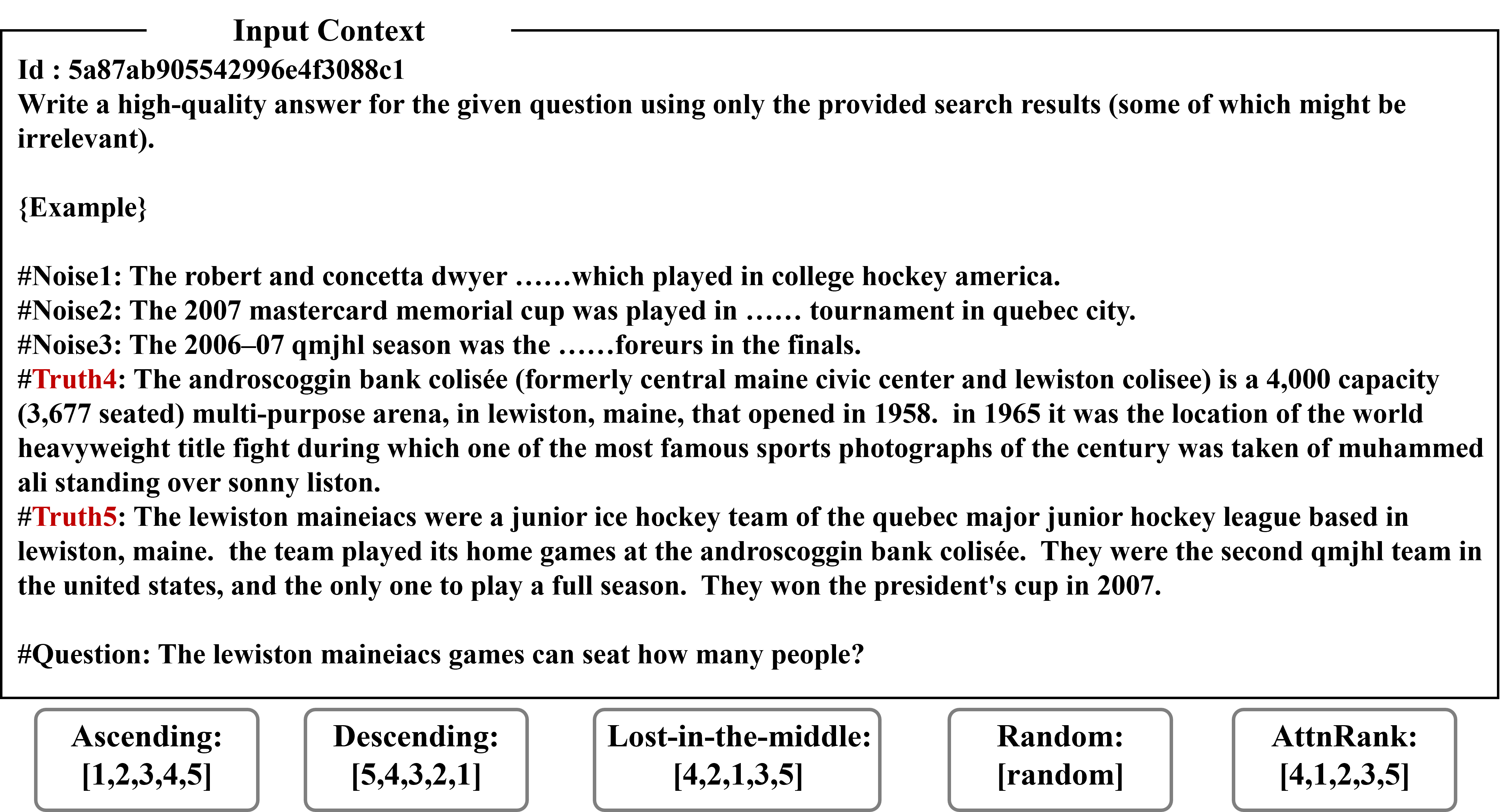}
    \caption{Case 1's input prompt and ranking outcomes under different reranking strategies.
}
    \label{fig:case2}
\end{figure*}
% We further conducted three case studies. Figure~\ref{fig:case2} illustrates the case inputs, comprising five documents—three noise documents and two containing the correct answers. Figure~\ref{fig:case1} presents the average attention scores for relevant and irrelevant documents under various ranking strategies. The results demonstrate that \ours effectively directs the model’s attention to the relevant documents while reducing focus on the irrelevant ones.

We further conducted three case studies.  Figures  \ref{fig:case1}–\ref{fig:case2}, \ref{fig:case21}–\ref{fig:case22} and \ref{fig:case31}–\ref{fig:case32} present two additional case studies. \ours robustly maintains high attention on critical documents and low attention on noise, validating its effectiveness. In contrast, the descending strategy preserves high attention for relevant documents but also highlights noise, and the lost-in-the-middle approach reduces noise attention at the expense of under-attending to relevant documents. \ours consistently focuses on important content while disregarding noise.

\begin{figure*}
    \centering
    \includegraphics[width=0.8\linewidth]{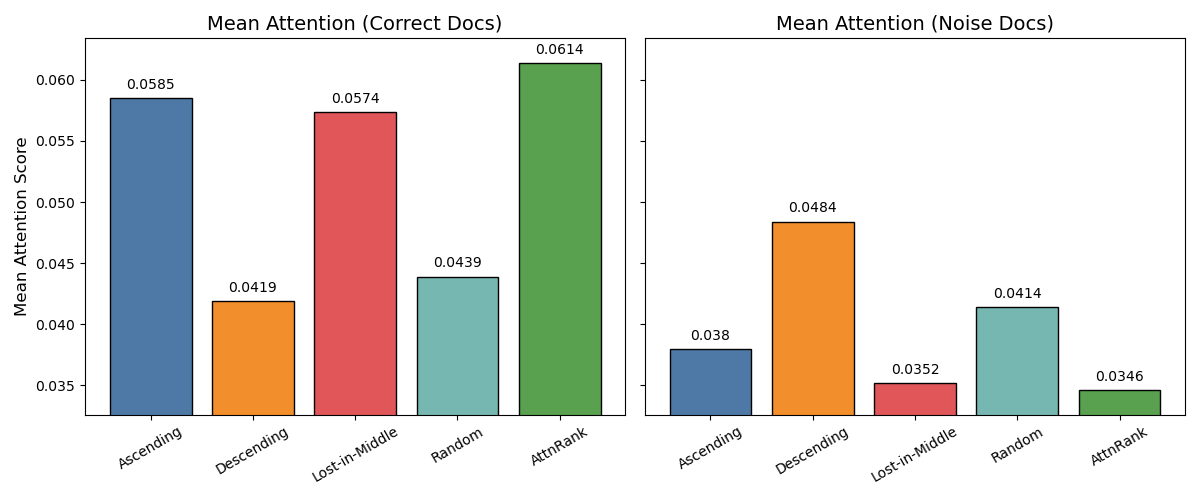}
    \caption{Average attention scores for relevant and noise documents under various reranking strategies in case 1. \ours attains the highest attention on relevant documents and the lowest on irrelevant documents, validating its effectiveness.
}
    \label{fig:case1}
\end{figure*}
\begin{figure*}
    \centering
    \includegraphics[width=0.7\linewidth]{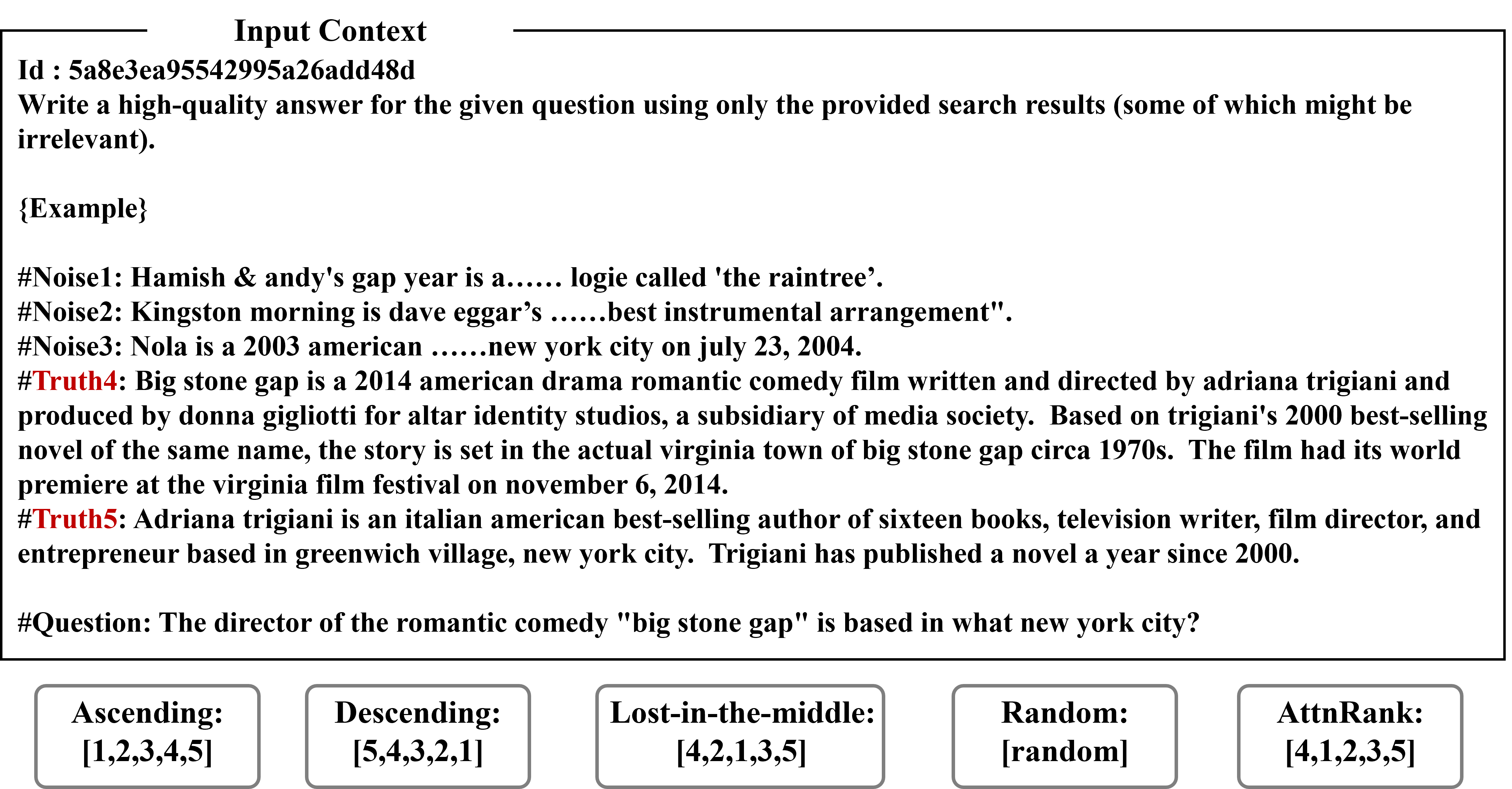}
    \caption{Case 2's input prompt and ranking outcomes under different reranking strategies.}
    \label{fig:case21}
\end{figure*}

\begin{figure*}
    \centering
    \includegraphics[width=0.8\linewidth]{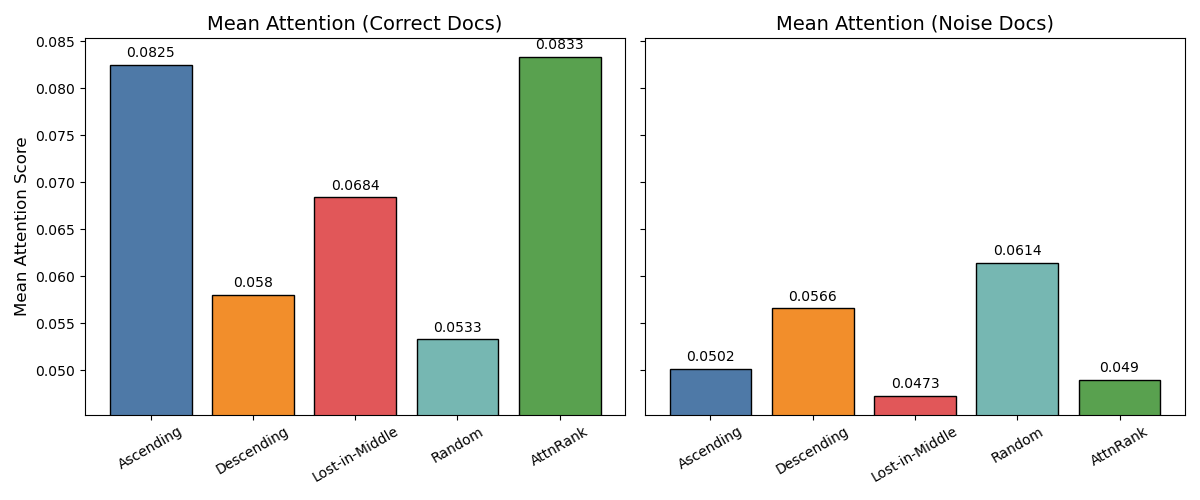}
    \caption{Average attention scores for relevant and noise documents in case 2.}
    \label{fig:case22}
\end{figure*}

\begin{figure*}
    \centering
    \includegraphics[width=0.8\linewidth]{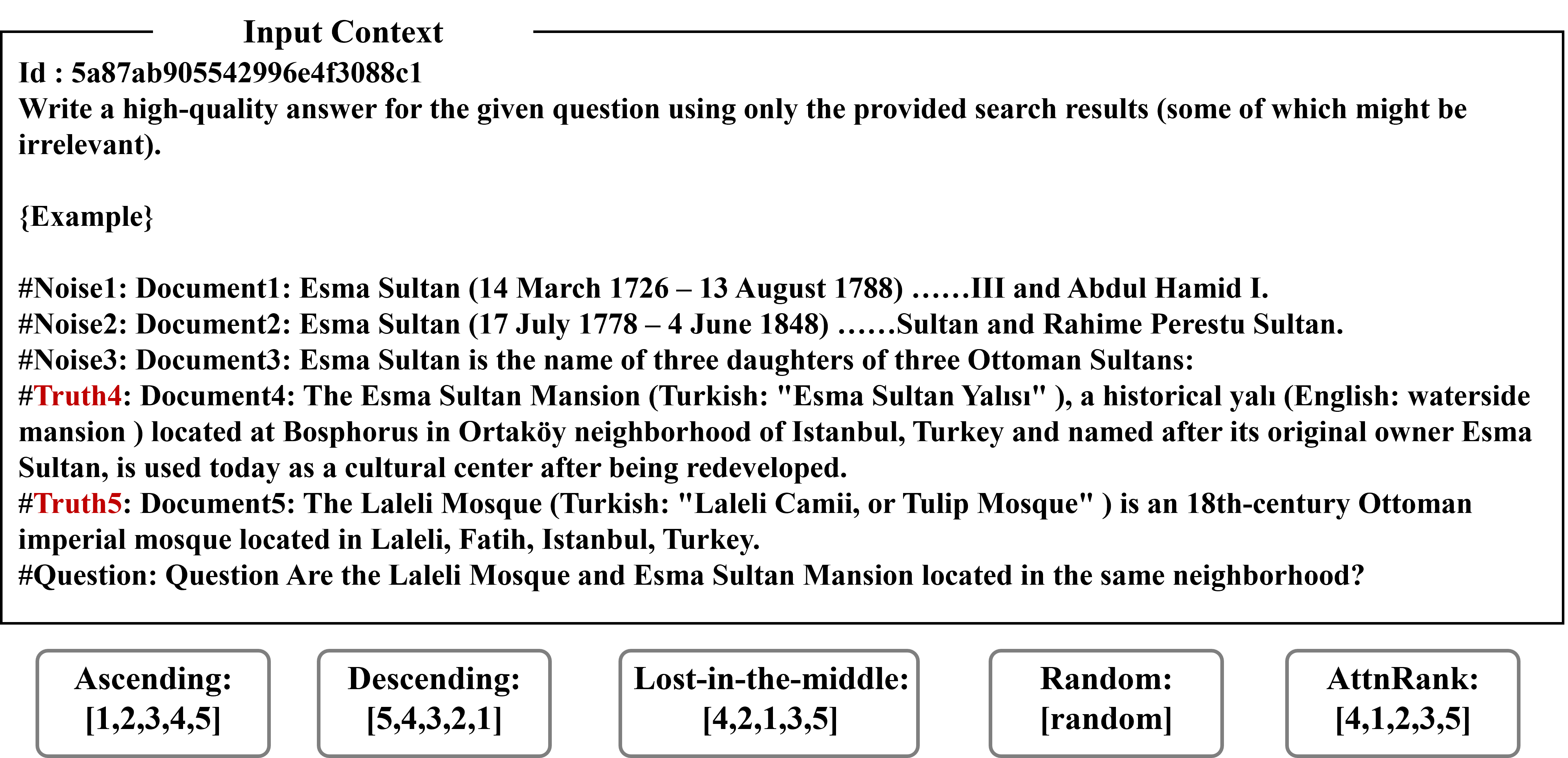}
    \caption{Case 3's input prompt and ranking outcomes under different reranking strategies.}
    \label{fig:case31}
\end{figure*}

\begin{figure*}
    \centering
    \includegraphics[width=0.8\linewidth]{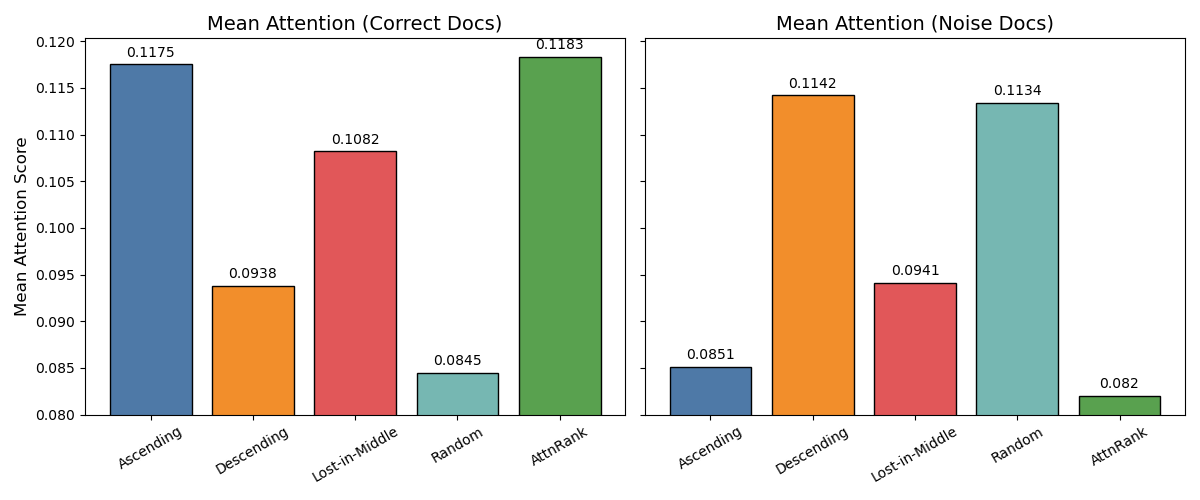}
    \caption{Average attention scores for relevant and noise documents in case 3.}
    \label{fig:case32}
\end{figure*}

\section{Limitation and future works}
\label{limitation}

The proposed \ours method effectively mitigates detrimental positional biases by aligning document relevance with the model's intrinsic attention patterns, thereby unlocking significant performance gains. However, due to the fact that most existing closed-source models do not expose attention scores through their APIs, the effectiveness of our approach on such models cannot be verified at present. This limitation highlights the need for future work on developing re-ranking strategies compatible with closed-source LLMs. 
Moreover, in many applications, it is desirable for the model to attend equally to content across different positions. We therefore suggest that future research explore methods to mitigate the \emph{attention basin} phenomenon, enabling more uniform attention distribution across different tokens.

\end{document}